\documentclass[11pt]{article}
\usepackage[final]{acl}
\usepackage{times}
\usepackage{latexsym}
\usepackage[T1]{fontenc}
\usepackage[utf8]{inputenc}
\usepackage{amsmath,amssymb,amsthm}
\usepackage{booktabs}
\usepackage{graphicx}
\usepackage{url}
\usepackage{xcolor}
\usepackage{microtype}
\usepackage{listings}
\usepackage{mathtools}
\usepackage{bm}
\usepackage{multirow}
\usepackage{colortbl}
\usepackage{tabularx}
\newtheorem{theorem}{Theorem}[section]
\newtheorem{proposition}[theorem]{Proposition}
\newtheorem{corollary}[theorem]{Corollary}

\newtheorem{remark}{Remark}[section]
\newtheorem{definition}{Definition}[section]
\newcommand{\Dtrain}{\mathcal{D}_{\mathrm{train}}}
\newcommand{\Dood}{\mathcal{D}_{\mathrm{ood}}}
\newcommand{\Forgetting}{\mathcal{F}}
\newcommand{\model}{\mathcal{M}_\theta}
\newcommand{\adapter}{\mathcal{A}_\phi}
\newcommand{\R}{\mathbb{R}}
\newcommand{\E}{\mathbb{E}}
\newcommand{\1}{\mathbf{1}}
\definecolor{bestgreen}{HTML}{E8F5E9}
\definecolor{worstred}{HTML}{FFEBEE}
\title{When to Adapt: Conditional Memory Adapters for Retention-Preserving Domain Specialization}
\author{
  Jiayu Hou \quad Lei Wang\thanks{Corresponding author} \\
  University of Electronic Science and Technology of China, Chengdu, China \\
  \texttt{202421230134@std.uestc.edu.cn} 
}
\begin{document}
\maketitle
\begingroup
\renewcommand{\thefootnote}{}
\footnotetext{Accepted to Findings of EMNLP 2026.}
\endgroup
\begin{abstract}
Large language models deployed in specialized domains must improve in-domain performance without sacrificing general capabilities. Existing parameter-efficient fine-tuning methods are typically always on: their learned perturbations are applied to every input, which can degrade out-of-domain (OOD) performance. We propose \textbf{Engram Adapter}, a framework that repurposes pretraining-time conditional memory as a post-hoc adapter for frozen LLMs. It uses multi-channel matching over local n-gram patterns with explicit occupancy tracking as a lightweight selectivity prior, making residual injection more likely on in-domain inputs while a learned scalar gate suppresses incoherent OOD retrievals. We evaluate on Qwen3-4B and Qwen3-8B with AG-News and MedMCQA as adaptation tasks and OOD benchmarks spanning reasoning, translation, code generation, and legal reasoning. Engram Adapter improves in-domain accuracy while preserving 99.4\%--100.1\% of average OOD performance; on LegalBench it slightly exceeds the frozen base model on average, whereas comparable always-on baselines degrade sharply. Mechanistic analyses show that although OOD activations are non-zero, gate and projection attenuation reduce residuals to approximately 0.08\% of hidden-state norm, yielding small KL drift and negligible accuracy change. These results suggest conditional activation is a promising route toward modular, retention-preserving domain specialization over frozen backbones.
\end{abstract}
\section{Introduction}
\label{sec:intro}
 
Recent years have seen growing demand for deploying large language models in specialized domains such as medicine, law, and finance. Under limited training and deployment budgets, parameter-efficient fine-tuning (PEFT) has become the dominant paradigm for post-training adaptation. Compared with full fine-tuning, PEFT freezes the backbone and updates only a small number of additional parameters, substantially reducing training cost and storage overhead while still delivering strong downstream performance. Representative methods include Adapter \citep{houlsby2019adapter}, LoRA \citep{hu2022lora}, DoRA \citep{liu2024dora}, and PiSSA \citep{meng2024pissa}.
 
\begin{figure*}[t]
    \centering
    \includegraphics[width=0.92\textwidth]{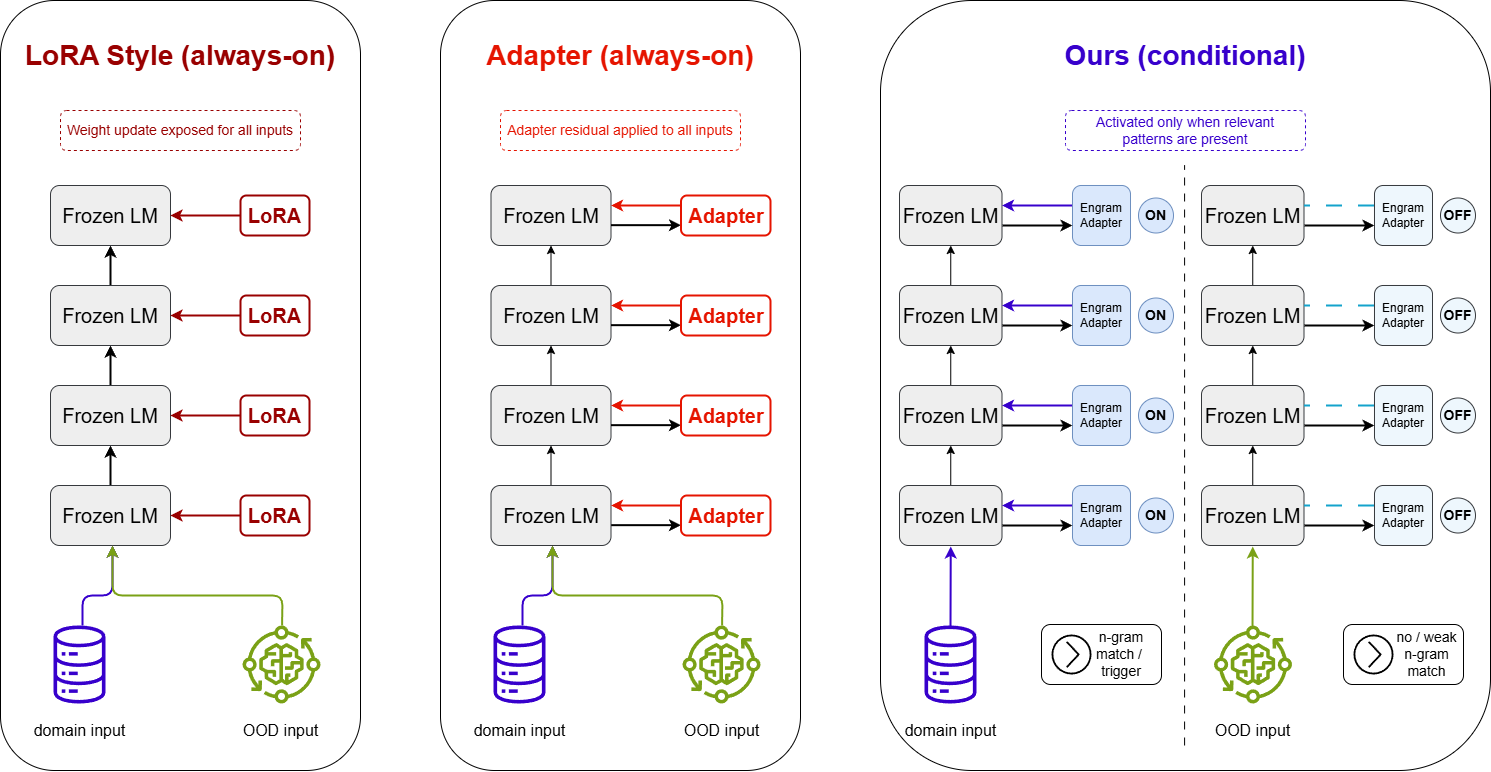}
    \caption{
    Conceptual comparison between always-on PEFT and conditional adaptation.
    LoRA-style methods expose learned low-rank weight updates to all inputs at
    inference time, while adapter-style methods expose an additional residual
    branch to all inputs. In contrast, our Engram Adapter conditionally exposes its
    residual through explicit local-pattern matching: domain-relevant inputs
    are more likely to activate the adapter, whereas out-of-domain inputs with
    no or weak matches receive masked or gate-attenuated residuals.
    }
    \label{fig:overall}
\end{figure*}
 
Despite their success, these methods often improve in-domain performance at the cost of out-of-domain general capabilities. Recent work has begun to revisit this adaptation--retention trade-off: \citet{lin2026sft} show that using a smaller learning rate can substantially mitigate general-performance loss while preserving comparable target-domain performance. However, learning-rate control mainly constrains the magnitude of parameter updates, rather than explicitly determining \emph{when} the adapted behavior should be invoked. Most PEFT methods are effectively always on: once trained, their residual modules or weight perturbations are applied to every input without distinguishing whether it belongs to the target domain. When domain data are small and distributionally narrow, such unconditional intervention can distort general representations and induce catastrophic forgetting. In other words, the central problem in domain specialization is not only \emph{how} to inject domain knowledge, but also \emph{when} that knowledge should be invoked. \textbf{This leads to the first challenge}: how can we make the adapter intervene when the input is relevant to the target domain, while remaining minimally intrusive otherwise? Figure~\ref{fig:overall} illustrates this distinction.
 
However, conditional activation requires a stable, lightweight signal of domain membership. Soft gating over continuous hidden representations alone is often insufficient in low-resource settings, where the boundary between in-domain and out-of-domain inputs is not cleanly separable and limited data make it difficult for a gate to learn a robust decision rule. Our motivation is directly inspired by DeepSeek Engram \citep{deepseek2026engram}, which shows that local $n$-gram patterns can serve as explicit triggers for conditional memory retrieval through efficient hash-based lookup. We use this mechanism as a \emph{selectivity prior}: the relatively stable terminology and collocations that characterize specialized corpora bias the adapter toward target-domain activation, but do not by themselves guarantee OOD transparency. \textbf{This raises the second challenge}: how can a conditional memory mechanism originally designed for pretraining be repurposed into a lightweight post-hoc adapter that provides sufficiently discriminative domain signals without collapsing into another form of always-on PEFT?
 
Moreover, a practically useful specialization method must keep overhead manageable, remain modular, and stay stable across model scales, task types, and capability dimensions. \textbf{This gives rise to the third challenge}: how can conditional activation be realized as a reusable PEFT mechanism with stable behavior across scales and evaluation domains, while retaining practical modularity for deployment?
 
Finally, conditional activation is not an ideal binary process. Different domains share vocabulary and local structural patterns, and any hash-based mechanism can produce non-zero false activations. A retention-preserving adapter must therefore explain not only why it activates more on in-domain inputs, but also why its perturbation remains small enough on out-of-domain inputs even when some activations occur. \textbf{This leads to the fourth challenge}: how can we maintain strong empirical preservation of general capabilities under unavoidable false activations, and provide a mechanistic explanation for this behavior?
 
To address these challenges, we propose Engram Adapter, a parameter-efficient framework that repurposes DeepSeek Engram from a pretraining-time conditional memory component into a post-hoc adapter for frozen large language models. Our framework uses an occupancy-tracked joint mask over multiple hash-indexed lookups to provide a lightweight selectivity prior, and combines a learned scalar gate with projection attenuation to suppress incoherent retrievals under false activations. The core evidence is mechanistic: we examine how activation selectivity relates to small residual perturbations, small output-distribution drift, and stable downstream behavior. To balance specialization and deployability, we inject the adapter into only a small set of key layers and share memory tables across layers, yielding a modular design compatible with maintaining separately trained adapters alongside a shared frozen backbone, although direct multi-domain adapter-switching evaluation is left to future work.
 
\paragraph{Contributions.} Corresponding to the four challenges above:
\begin{enumerate}
  \item We propose a conditional-activation framework that turns domain specialization from uniform parameter perturbation into selective knowledge injection driven by local pattern matching.
  \item We introduce adapter-oriented architectural modifications including backward-looking $n$-gram channels, occupancy-based joint masking, and learned scalar gating.
  \item We conduct systematic experiments on Qwen3-4B and Qwen3-8B with AG-News and MedMCQA as adaptation tasks and OOD evaluation spanning reasoning, translation, code generation, and legal reasoning, showing improved target-domain performance with more stable general-capability preservation.
  \item We provide mechanistic residual and output analyses revealing a perturbation cascade from domain selectivity to residual suppression to KL/output stability and finally to stable OOD behavior.
\end{enumerate}
\section{Related Work}
\label{sec:related}
\paragraph{PEFT and always-on adaptation.}
Parameter-efficient fine-tuning adapts frozen language models with a small set
of additional or reparameterized parameters. LoRA and its variants
\citep{hu2022lora,dettmers2023qlora,zhang2023adalora}, adapters
\citep{houlsby2019adapter,pfeiffer2021adapter}, prefix tuning
\citep{li2021prefix}, and $(IA)^3$ \citep{liu2022ia3} reduce training and
storage cost, but their learned perturbations are typically exposed to every
input at inference time. They address \emph{how} to adapt efficiently, whereas
we focus on \emph{when} adaptation should intervene.
\paragraph{Forgetting-aware adaptation.}
Catastrophic forgetting has traditionally been mitigated by constraining
parameter changes, allocating task-specific capacity, or replaying data
\citep{mccloskey1989catastrophic,kirkpatrick2017ewc,mallya2018packnet,
rolnick2019experience}. Recent domain-specific SFT work shows that smaller
learning rates and TALR (Token-Adaptive Loss Reweighting) can reduce
general-capability degradation by changing update magnitudes or token-level
loss weights \citep{lin2026sft}. These methods still produce a single adapted
model whose updates are exposed unconditionally. Engram Adapter instead freezes
the backbone and conditions residual exposure on explicit local-pattern
membership.
\paragraph{Conditional memory and membership-based activation.}
Memory-augmented models retrieve auxiliary information at inference time, from
nearest-neighbor datastores \citep{khandelwal2020knn} or external documents
\citep{guu2020realm,lewis2020rag}, usually as an additional content source.
DeepSeek Engram \citep{deepseek2026engram} instead uses efficient hash lookup
over local $n$-gram patterns as a conditional memory mechanism. We repurpose
this pretraining-time idea as a post-hoc adapter: lookup retrieves a
domain-specific residual and supplies the activation signal controlling whether
that residual is exposed. Our use of occupancy-based membership differs from
standalone set membership because the occupancy-tracked joint mask controls a
neural residual path. Since false activations cannot be eliminated in natural
language, membership filtering is combined with a learned gate and projection
attenuation.
\section{Engram Adapter}
\label{sec:arch}
\subsection{Conditional-Activation Adapter Framework}
\begin{figure*}[t]
\centering
\includegraphics[width=0.92\textwidth]{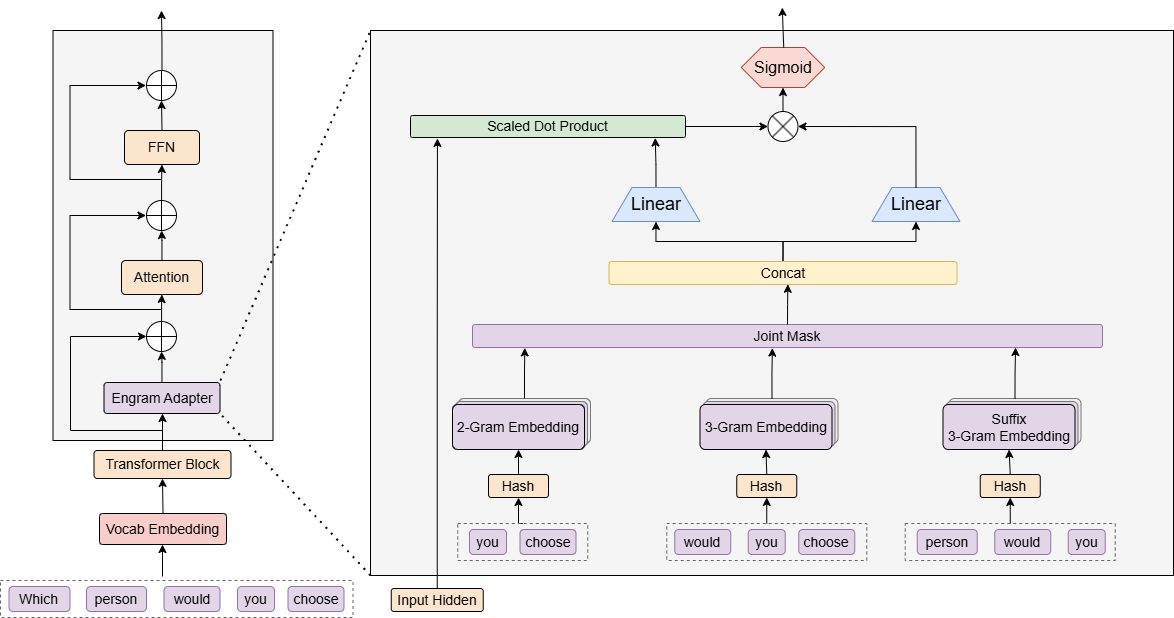}
\caption{Engram Adapter overview. Four self-attention layers
($\mathcal{L}=\{1,3,7,15\}$) receive hook-based adapters sharing hash memory
tables and occupancy masks. Input $n$-grams retrieve masked memory, which is
projected and gated before residual injection; only adapter parameters are
trainable.}
\label{fig:architecture}
\end{figure*}
Engram Adapter turns domain specialization into an input-conditioned residual
intervention. Given a frozen base model $\model$ (e.g., Qwen3-4B), we attach
one \texttt{EngramAdapter} module to the self-attention sublayer of each layer
in an exponentially spaced set $\mathcal{L} = \{1, 3, 7, 15\}$ via PyTorch
forward hooks, covering the model depth with only four injected layers. The
modified hidden state at layer $\ell \in \mathcal{L}$ is:
\begin{equation}
  u^{(\ell)}_{\mathrm{new}} = u^{(\ell)} + \alpha \cdot g^{(\ell)} \cdot
  \mathrm{mem}^{(\ell)}(x_{1:t}),
  \label{eq:residual}
\end{equation}
where $u^{(\ell)} \in \R^{B \times T \times D}$ is the hidden state, $\alpha =
0.2$ is a fixed scaling factor, $g^{(\ell)} \in [0,1]^{B \times T}$ is the
learned gate, and $\mathrm{mem}^{(\ell)}$ is the Engram memory retrieval. For
layers $\ell \notin \mathcal{L}$, the hidden state passes through unmodified.
All parameters $\theta$ of $\model$ satisfy $\nabla_\theta \mathcal{L} = 0$
throughout training. Unlike always-on PEFT modules, the residual in
\eqref{eq:residual} is exposed only through domain-matching memory and gate:
unmatched patterns are masked before projection, and incoherent retrievals are
attenuated by the gate.
\subsection{Domain-Matching Memory}
\label{sec:arch:tables}
To provide an explicit domain signal, each adapter reads from shared hash-indexed
memory tables indexed by local token patterns. For each suffix order
$n \in \mathcal{N} = \{2,3\}$, previous-context order
$n \in \mathcal{N}_{\mathrm{prev}} = \{3\}$, and hash index
$h \in \{0,\ldots,k-1\}$ with $k=4$, we maintain an embedding table and a
corresponding occupancy mask:
\[
  \begin{aligned}
  E_{n,h} &\in \R^{M \times d_h}, &
  F_{n,h} &\in \{0,1\}^{M}, \\
  M &= 90{,}007, &
  d_h &\in \{128,256\}.
  \end{aligned}
\]
We use $d_h=128$ for Qwen3-4B and $d_h=256$ for Qwen3-8B.
The 12 memory tables are shared across all injected layers,
amortizing memory cost and giving every layer the same domain-pattern inventory.
\subsection{Design Choices Beyond DeepSeek's Original Engram}
\label{sec:design-extensions}
We introduce five extensions relative to \citet{deepseek2026engram}.
\paragraph{Previous-context channels.}
The \texttt{prev\_ngram\_orders} setting addresses the collision issue noted by
\citet{deepseek2026engram}: hash collision probability grows rapidly as more
token IDs participate in the hash, making direct long-$n$-gram lookup
(e.g., $n \geq 4$) unreliable. We therefore use suffix orders $(2,3)$ rather
than direct long suffix channels, and introduce a separate set of hash tables that
indexes the $n$-gram immediately \emph{preceding} the current position. When a
suffix 3-gram $(x_{t-2}, x_{t-1}, x_t)$ and a previous-context 3-gram
$(x_{t-3}, x_{t-2}, x_{t-1})$ are used jointly, they cover a contiguous
4-token window $(x_{t-3}, \ldots, x_t)$ while keeping each hash over only three
token IDs.
\paragraph{Group-level joint masking.}
The original Engram uses per-bucket lookup without cross-hash coordination. We
add occupancy-tracked joint masking across all $k$ hashes within an order
group, providing the selectivity prior used by the retention mechanism
(Appendix~\ref{sec:theory}).
\paragraph{Learned scalar gate.}
The gate~\eqref{eq:gate} combines learned linear terms
$W_m \bm{c}(t)$ over the concatenated retrieval vector with a weighted
hidden--memory dot-product term
$(W_u \odot u(t))^\top \mathrm{mem}(t) / \sqrt{D}$. Ablation experiments
(Section~\ref{sec:activation}) show that the linear terms account for nearly
all of the gate's OOD suppression effect, while the dot-product term contributes
negligibly.
\paragraph{Random initialization with explicit occupancy mask.}
Unlike the original Engram's zero initialization, we initialize embedding
tables from $\mathcal{N}(0, 0.02^2)$ and maintain a separate boolean occupancy
mask $F_{n,h}$ that tracks which buckets have been accessed during training.
All occupancy flags are initialized to $F_{n,h}[i] = 0$. During training, each
accessed bucket is marked by setting $F_{n,h}[i] \leftarrow 1$; this
non-trainable occupancy buffer is saved with the adapter weights. At
inference time, the joint mask~\eqref{eq:joint_mask} is computed from $F_{n,h}$
rather than from $\|\cdot\|_1 > 0$. This decouples membership tracking from
embedding values, giving stronger initial gradients and avoiding false
negatives when an updated embedding remains numerically close to zero
(Proposition~\ref{prop:identity}).
\paragraph{Autoregressive support.}
A model-level forward pre-hook maintains a sliding window of
\texttt{history\_ids} aligned to the maximum $n$-gram order, enabling correct
$n$-gram construction during token-by-token generation.
\subsection{Forward Pass Details}
\label{sec:forward-pass}
\paragraph{Step 1: $n$-gram construction.}
For each position $t$ and suffix order $n \in \mathcal{N}$, the suffix $n$-gram
is $\bm{q}^{(n)}_t = (x_{t-n+1}, \ldots, x_t)$. For previous-context order $n
\in \mathcal{N}_{\mathrm{prev}}$, the preceding $n$-gram is
$\bm{p}^{(n)}_t = (x_{t-n}, \ldots, x_{t-1})$.
\paragraph{Step 2: Hash computation.}
For suffix $n$-grams, the bucket index for hash function $h$ is:
\begin{equation}
  \begin{aligned}
  \mathrm{idx}_{n,h}(t)
  &= \Bigl(
      \sum_{j=0}^{n-1} x_{t-j}
      (1{,}000{,}003 + 97j) \\
  &\qquad + s_{n,h}
    \Bigr) \bmod M,
  \end{aligned}
  \label{eq:hash}
\end{equation}
where $s_{n,h} = 10{,}000n + 97h$ is a deterministic seed. Previous-context
$n$-grams use the same formula with a seed offset of $1{,}000{,}000$.
\paragraph{Step 3: Lookup and occupancy-tracked joint masking.}
The retrieved chunk is $v_{n,h}(t) = E_{n,h}[\mathrm{idx}_{n,h}(t)] \in
\R^{d_h}$. The per-hash activation mask is determined by the occupancy flag
rather than the embedding norm:
\[
  \mu_{n,h}(t) = F_{n,h}\bigl[\mathrm{idx}_{n,h}(t)\bigr].
\]
The \emph{joint mask} for order group $n$ requires \emph{all} $k$ hashes to
retrieve filled buckets:
\begin{equation}
  m_n(t) = \prod_{h=0}^{k-1} \mu_{n,h}(t),
  \label{eq:joint_mask}
\end{equation}
and the filtered chunk is $\tilde{v}_{n,h}(t) = m_n(t) \cdot v_{n,h}(t)$.
If any single hash maps to an unfilled bucket, all $k$ retrieved embeddings
for that order group are zeroed out, regardless of the other hashes' status.
\paragraph{Step 4: Projection and gating.}
All filtered chunks are concatenated and projected:
\begin{equation}
  \begin{aligned}
  \bm{c}(t) &= \bigl[\tilde{v}_{n,h}(t)\bigr]_{n,h}, \\
  \mathrm{mem}(t) &= W_{\mathrm{proj}}\,\bm{c}(t).
  \end{aligned}
  \label{eq:proj}
\end{equation}
Here $\bm{c}(t) \in \R^{kN_gd_h}$ with
$N_g=|\mathcal{N}|+|\mathcal{N}_{\mathrm{prev}}|=3$,
$\mathrm{mem}(t) \in \R^D$, and
$W_{\mathrm{proj}} \in \R^{D \times kN_gd_h}$ is initialized to zero. The gate is
computed as:
\begin{equation}
  g(t) = \sigma\!\left(
     W_m\,\bm{c}(t)
    + \frac{(W_u \odot u(t))^\top \mathrm{mem}(t)}{\sqrt{D}}
  \right),
  \label{eq:gate}
\end{equation}
with $W_u \in \R^{D}$ and $W_m \in \R^{kN_gd_h}$ for the scalar gate variant.
The residual injection
follows \eqref{eq:residual}.
\paragraph{Trainable parameters.}
The trainable parameters are the shared memory tables plus per-layer projection
and gate parameters, totaling approximately 154M parameters in our Qwen3-4B
setting. This is larger than LoRA $r$=64 ($\approx$132M) and LoRA $r$=32
($\approx$66M), but many memory rows remain inactive under sparse occupancy, and
adapters are injected into only 4 of 36 layers. Appendix~\ref{app:param-count}
gives the full calculation.
\section{Experiments}
\label{sec:experiments}
We evaluate the Engram Adapter against LoRA baselines on Qwen3-4B and Qwen3-8B,
focusing on two axes: (1) domain adaptation effectiveness, and (2) preservation
of general capabilities (retention preservation). We conduct experiments across
two model scales, two domain adaptation tasks, and 12 legal reasoning tasks to
assess generality.
\subsection{Experimental Setup}
We evaluate Qwen3-4B and Qwen3-8B \citep{qwen2025qwen3} on limited-data
domain adaptation and OOD retention. Qwen3-4B is trained on 1{,}000 examples
from AG-News or MedMCQA and evaluated on ARC-Challenge, the opposite domain,
FLORES, and MBPP; Qwen3-8B is trained on 10{,}000 AG-News examples and
evaluated on MedMCQA, ARC-Challenge, and LegalBench. LoRA $r\in\{32,64\}$ and
Engram Adapter are run with three seeds (42, 7783, 114514) on 4B, while
non-LoRA non-Engram 4B baselines and all reported 8B baselines use seed 7783.
For Qwen3-4B, all non-MBPP evaluations use zero-shot prompting, while MBPP
uses 3-shot prompting; all Qwen3-8B evaluations use zero-shot prompting.
Engram Adapter uses layers
$\mathcal{L}=\{1,3,7,15\}$, suffix orders $(2,3)$, previous-context order
$(3,)$, $k=4$, table size $M=90{,}007$, and per-hash dimension
$d_h=128$ on Qwen3-4B and $d_h=256$ on Qwen3-8B. Appendix~\ref{app:exp-setup}
provides the full setup summary and benchmark details.
\subsection{Experiment 1: AG-News Domain Adaptation (4B)}
Table~\ref{tab:main_results} presents the complete results for the AG-News
domain adaptation experiment on Qwen3-4B.
\begin{table*}[t]
\centering
\caption{AG-News domain adaptation results on Qwen3-4B. LoRA and Engram Adapter
results are reported as mean $\pm$ std across three random seeds
(42, 7783, 114514).
For Qwen3-4B, AG-News, ARC-C, MedMCQA, and FLORES are evaluated zero-shot;
MBPP uses 3-shot prompting.
\textbf{Bold}: best among adapted models; \underline{underline}: second best.
AG-News is the in-domain target task; the remaining four benchmarks
measure out-of-domain retention.}
\label{tab:main_results}
\small
\begin{tabular}{@{}l c c c cc c@{}}
\toprule
& \textbf{In-Domain} &
  \multicolumn{5}{c}{\textbf{Out-of-Domain (General Capabilities)}} \\
\cmidrule(lr){2-2} \cmidrule(lr){3-7}
\textbf{Method}
  & AG-News & ARC-C & MedMCQA & FLORES & FLORES & MBPP \\
  & Acc.\% & Acc.\% & Acc.\% & BLEU & chrF & pass@1 \\
\midrule
Base Model
  & 85.14 & 88.99 & 53.65 & 29.85 & 58.85 & 66.80 \\
\midrule
LoRA $r\!=\!64$
  & 89.89{\tiny$\pm$0.50} & \underline{88.14{\tiny$\pm$0.49}} & 53.72{\tiny$\pm$0.33}
  & 28.96{\tiny$\pm$0.44} & 58.13{\tiny$\pm$0.18} & 56.67{\tiny$\pm$12.21} \\
LoRA $r\!=\!32$
  & 90.15{\tiny$\pm$0.14} & 88.11{\tiny$\pm$0.38} & \textbf{54.90{\tiny$\pm$0.27}}
  & 29.55{\tiny$\pm$0.19} & 58.61{\tiny$\pm$0.06} & \underline{64.93{\tiny$\pm$0.84}} \\
Adapter
  & \textbf{90.54} & 87.88 & 53.41 & \underline{29.91} & 58.56 & 56.00 \\
DoRA $r\!=\!32$
  & \underline{90.30} & \underline{88.14} & \underline{54.58} & \textbf{30.08} & \underline{58.75} & 59.80 \\
PiSSA $r\!=\!32$
  & 88.79 & 19.88 & 32.23 & 0.00 & 1.87 & 0.00 \\
$(IA)^3$
  & 86.74 & 86.01 & 53.26 & 21.22 & 55.74 & 56.40 \\
\midrule
\textbf{Engram Adapter}
  & 87.93{\tiny$\pm$0.64} & \textbf{89.45{\tiny$\pm$0.47}} & 53.57{\tiny$\pm$0.20} & 29.68{\tiny$\pm$0.33} & \textbf{58.82{\tiny$\pm$0.09}} & \textbf{65.27{\tiny$\pm$0.50}} \\
\bottomrule
\end{tabular}
\end{table*}
\subsection{Experiment 2: MedMCQA Domain Adaptation (4B)}
To validate the generality of our findings, we repeat the experiment with a
medical domain task. Table~\ref{tab:medmcqa_results} presents the results.
\begin{table*}[t]
\centering
\caption{MedMCQA domain adaptation results on Qwen3-4B. LoRA and Engram Adapter
results are reported as mean $\pm$ std across three random seeds
(42, 7783, 114514).
For Qwen3-4B, MedMCQA, ARC-C, AG-News, and FLORES are evaluated zero-shot;
MBPP uses 3-shot prompting.
\textbf{Bold}: best among adapted models; \underline{underline}: second best.
MedMCQA is the in-domain target task; the remaining four benchmarks
measure out-of-domain retention.}
\label{tab:medmcqa_results}
\small
\begin{tabular}{@{}l c c c cc c@{}}
\toprule
& \textbf{In-Domain} &
  \multicolumn{5}{c}{\textbf{Out-of-Domain (General Capabilities)}} \\
\cmidrule(lr){2-2} \cmidrule(lr){3-7}
\textbf{Method}
  & MedMCQA & ARC-C & AG-News & FLORES & FLORES & MBPP \\
  & Acc.\% & Acc.\% & Acc.\% & BLEU & chrF & pass@1 \\
\midrule
Base Model
  & 53.65 & 88.99 & 85.14 & 29.85 & 58.85 & 66.80 \\
\midrule
LoRA $r\!=\!64$
  & 54.38{\tiny$\pm$0.33} & 88.22{\tiny$\pm$0.19} & \underline{86.30{\tiny$\pm$0.24}}
  & \underline{30.06{\tiny$\pm$0.29}} & 58.56{\tiny$\pm$0.18} & 64.07{\tiny$\pm$0.34} \\
LoRA $r\!=\!32$
  & \underline{57.10{\tiny$\pm$0.27}} & 88.14{\tiny$\pm$0.14} & 86.13{\tiny$\pm$0.12}
  & \textbf{30.17{\tiny$\pm$0.10}} & \textbf{58.97{\tiny$\pm$0.01}} & \underline{65.13{\tiny$\pm$0.09}} \\
Adapter
  & 55.49 & 87.97 & 84.61 & 29.23 & 58.82 & 56.40 \\
DoRA $r\!=\!32$
  & \textbf{57.28} & \underline{88.31} & 84.76 & 29.92 & \underline{58.86} & 57.00 \\
PiSSA $r\!=\!32$
  & 55.63 & 27.30 & 0.00 & 0.00 & 0.24 & 0.00 \\
$(IA)^3$
  & 54.91 & 62.88 & 84.58 & 20.93 & 55.63 & 58.00 \\
\midrule
\textbf{Engram Adapter}
  & 55.79{\tiny$\pm$0.74} & \textbf{88.48{\tiny$\pm$0.34}} & \textbf{86.31{\tiny$\pm$0.53}} & 29.89{\tiny$\pm$0.06} & 58.73{\tiny$\pm$0.15} & \textbf{66.37{\tiny$\pm$0.25}} \\
\bottomrule
\end{tabular}
\end{table*}
\section{Mechanistic Evidence for Retention}
\label{sec:activation}
The retention claim in this paper rests on empirical mechanism analysis rather
than on a standalone proof. The occupancy-tracked mask provides a selectivity prior: it
makes activation substantially more likely on target-domain local patterns, but
natural language overlap and correlated hash inputs still produce false
activations. We therefore test the complete pathway from activation, to
residual size, to output drift, to downstream behavior.
\paragraph{OOD activation is non-negligible.}
For each trained adapter (AG-News and MedMCQA), we compute the token-level
joint activation rate, exact $n$-gram overlap with the training set, and
empirical false positive rate across the three retained $n$-gram channels
(n2, n3, p3). The reduced channel set still produces non-zero OOD activation:
for the two highest-activation cross-domain pairs analyzed below, roughly
26--28\% of token positions activate at least one order group. This means the
Engram Adapter is \emph{not} structurally transparent on OOD inputs in the
strong sense assumed by the idealized analysis.
\paragraph{The independence assumption does not hold.}
Empirical false positives can exceed the idealized $\rho^k$ reference because
the $k$ hash functions share the same token-ID input space. Natural language
$n$-grams are drawn from a highly non-uniform distribution, causing hash
collisions to be correlated across hash functions rather than independent.
\paragraph{Gate attenuation as a residual-level defense.}
Given that OOD activation is non-negligible, the strong OOD retention
($\geq$99.4\%) appears to come primarily from the learned scalar gate and
incoherent projection cancellation (Section~\ref{sec:gate}), rather than from
activation sparsity alone. The occupancy mask provides a \emph{relative}
selectivity prior, while the gate provides residual-level suppression of
incoherent contributions.
To test this mechanistic hypothesis, we conduct a detailed residual and
output analysis on the two highest-activation OOD pairs: AG adapter $\to$
MedMCQA and MedMCQA adapter $\to$ AG-News.
\subsection{Residual Perturbation Analysis}
For each token at each injected layer, we measure the ratio $\|\Delta
u\|_2 / \|u\|_2$, where $\Delta u = \alpha \cdot g \cdot \mathrm{mem}$ is
the Engram residual and $u$ is the base hidden state.
Table~\ref{tab:residual} reports the results.
\begin{table}[h]
\centering
\caption{OOD residual perturbation analysis for the two highest-activation
cross-domain pairs (seed 7783). All statistics are computed across all tokens and
all four injected layers ($\mathcal{L} = \{1,3,7,15\}$). ``Active'' rows
condition on tokens where the joint mask fired ($m_n = 1$ for at least one
order group).}
\label{tab:residual}
\footnotesize
\setlength{\tabcolsep}{2pt}
\begin{tabularx}{\linewidth}{@{}>{\raggedright\arraybackslash}Xcccc@{}}
\toprule
& \multicolumn{2}{c}{\textbf{AG$\to$Med}} &
  \multicolumn{2}{c}{\textbf{Med$\to$AG}} \\
\cmidrule(lr){2-3} \cmidrule(lr){4-5}
\textbf{Metric} & All & Active & All & Active \\
\midrule
Mean act.               & \multicolumn{2}{c}{28.26\%}
                        & \multicolumn{2}{c}{26.28\%} \\
Mean gate               & 0.599 & 0.640 & 0.656 & 0.690 \\
Mean $\|\Delta u\|/\|u\|$ & 0.078\% & 0.275\% & 0.076\% & 0.290\% \\
P95 $\|\Delta u\|/\|u\|$  & 0.371\% & 1.124\% & 0.348\% & 1.233\% \\
\bottomrule
\end{tabularx}
\end{table}
Despite mean joint activation rates of 26--28\%, the mean residual ratio is
only $\approx$0.08\% of the hidden-state norm. Even conditioning on activated
tokens, the 95th-percentile perturbation is only $\approx$1.1--1.2\% of the
hidden-state norm. This supports the view that the gate and incoherent projection
jointly suppress the adapter's contribution to a negligible level.
\subsection{Output Distribution Analysis}
To assess whether these small residual perturbations propagate to model
outputs, we compare the adapted model's predictions against the base model
on the same OOD evaluation sets. Table~\ref{tab:kl_flip} reports the
KL divergence between output distributions and the rate of prediction
changes.
\begin{table}[h]
\centering
\caption{Output distribution analysis for OOD cross-domain pairs (seed 7783).
Prediction change rate measures the fraction of examples where the
argmax prediction differs between the base and adapted models.
Negative flips are examples that the base model got right but the
adapted model got wrong; positive flips are the reverse.}
\label{tab:kl_flip}
\footnotesize
\begin{tabular}{@{}l cc@{}}
\toprule
\textbf{Metric} & \textbf{AG$\to$Med} & \textbf{Med$\to$AG} \\
\midrule
Base accuracy           & 53.65\% & 85.14\% \\
Adapted accuracy        & 53.60\% & 86.57\% \\
Prediction changed      & 7.55\%  & 2.16\% \\
Negative flip rate      & 3.80\%  & 0.36\% \\
Positive flip rate      & 3.75\%  & 1.79\% \\
Mean KL to base         & 0.088   & 0.040 \\
P95 KL to base          & 0.406   & 0.091 \\
\bottomrule
\end{tabular}
\end{table}
For AG $\to$ MedMCQA, the prediction change rate is 7.55\%, with negative
and positive flips nearly balanced (3.80\% negative vs.\ 3.75\% positive),
yielding a negligible accuracy change from 53.65\% to 53.60\%. For Med
$\to$ AG-News, the perturbation is smaller: only 2.16\% of predictions
change, with positive flips exceeding negative flips (1.79\% vs.\ 0.36\%),
so accuracy improves from 85.14\% to 86.57\%.
\begin{table*}[!t]
\centering
\caption{AG-News domain adaptation results on Qwen3-8B. All methods are trained on
10,000 AG-News examples with seed 7783. AG-News is the in-domain target task;
MedMCQA, ARC-Challenge, and LegalBench measure out-of-domain retention. All reported
8B baselines use seed 7783, and all Qwen3-8B evaluations use zero-shot
prompting. OOD Avg.\ Ret. averages relative retention over MedMCQA,
ARC-Challenge, and LegalBench.}
\label{tab:8b_results}
\small
\begin{tabular}{@{}l cccc c@{}}
\toprule
& \textbf{In-Domain} & \multicolumn{3}{c}{\textbf{OOD}} & \textbf{OOD Avg.} \\
\cmidrule(lr){2-2} \cmidrule(lr){3-5} \cmidrule(lr){6-6}
\textbf{Method}
  & AG-News & MedMCQA & ARC-C & LegalBench & Ret.\% \\
  & Acc.\% & Acc.\% & Acc.\% & Avg.\ Acc.\% & \\
\midrule
Base Model
  & 79.49 & 55.51 & 89.77 & 71.91 & 100.00 \\
\midrule
LoRA $r\!=\!64$
  & \textbf{90.29} & \textbf{57.92} & \underline{90.53} & 44.67 & 89.1 \\
LoRA $r\!=\!64$ (lr=$10^{-5}$)
  & \underline{86.96} & \underline{57.33} & 89.68 & 61.08 & 96.0 \\
LoRA $r\!=\!64$ (lr=$10^{-6}$)
  & 82.18 & 53.60 & 89.52 & \underline{68.54} & 97.2 \\
TALR $r\!=\!64$
  & 79.57 & 56.87 & \textbf{91.04} & 65.16 & \underline{98.2} \\
\midrule
\textbf{Engram Adapter}
  & 86.47 & 56.83 & 89.52 & \textbf{72.53} & \textbf{101.0} \\
\bottomrule
\end{tabular}
\end{table*}
\subsection{Mechanistic Summary}
These experiments support the following perturbation pathway for OOD preservation:
\begin{enumerate}
  \item \textbf{Activation}: OOD tokens do trigger joint activation
    ($\approx$26--28\% of tokens), so retention cannot be explained by exact
    OOD transparency.
  \item \textbf{Residual}: Despite activation, the gate and incoherent
    projection suppress the injected residual to $<$0.08\% of the
    hidden-state norm on average ($<$1.2\% at P95 among activated tokens).
  \item \textbf{Output}: The small residuals produce limited output
    distribution drift (mean KL 0.04--0.09), with prediction changes
    affecting only 2--8\% of examples.
  \item \textbf{Behavior}: Net accuracy change is negligible
    ($-$0.05 to $+$1.43 percentage points), consistent with no catastrophic
    forgetting even when activation is non-negligible.
\end{enumerate}
\section{Scaling and Baseline Ablations on Qwen3-8B}
\label{sec:scaling}
To assess whether the Engram Adapter's properties hold at larger scale and to
compare against training-time retention controls, we conduct experiments on
Qwen3-8B with additional baselines---\textbf{LoRA with reduced learning rates}
($10^{-5}$ and $10^{-6}$) and \textbf{TALR} \citep{lin2026sft}---and evaluate on 12
LegalBench tasks \citep{guha2023legalbench}.
\paragraph{Additional 8B baselines.}
We include LoRA $r$=64 with lr=$10^{-5}$, LoRA $r$=64 with lr=$10^{-6}$,
and TALR LoRA $r$=64 \citep{lin2026sft} to test whether update-magnitude
reduction or token-adaptive loss reweighting can preserve OOD capabilities
without occupancy-conditioned activation. The lr=$10^{-5}$ run serves as a
near-matched in-domain LoRA comparison to Engram Adapter. Implementation
details for these baselines are provided in Appendix~\ref{app:8b-baselines}.
\paragraph{8B AG-News results.}
Table~\ref{tab:8b_results} presents the results on Qwen3-8B trained on AG-News.
The Engram Adapter achieves 86.47\% in-domain accuracy,
a larger gain than on 4B, while retaining OOD performance: MedMCQA 56.83\%,
ARC-Challenge 89.52\%, and LegalBench 72.53\% average accuracy. LoRA $r$=64 reaches
higher in-domain accuracy (90.29\%) but falls sharply on LegalBench (44.67\%).
The intermediate-learning-rate LoRA baseline closely matches Engram Adapter
in-domain (86.96\% vs.\ 86.47\%) but still drops to 61.08\% on LegalBench and
96.0\% average OOD retention. The remaining training-time baselines expose
complementary failure modes: the more conservative LoRA lr=$10^{-6}$ still
remains below the base model on MedMCQA, while TALR preserves OOD scores better but achieves only
marginal in-domain improvement (79.57\% vs.\ base 79.49\%).
\begin{figure}[t]
\centering
\includegraphics[width=\linewidth]{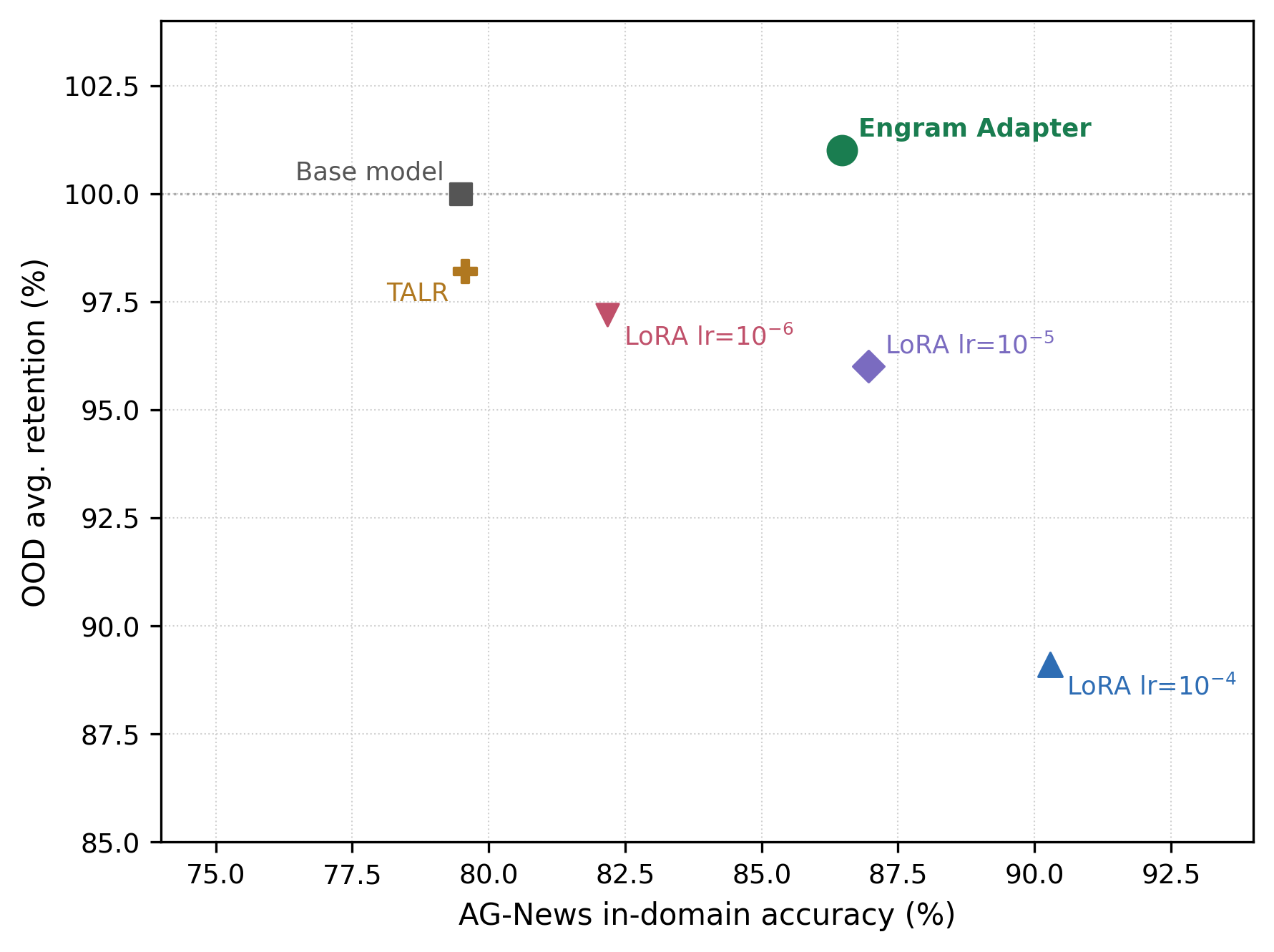}
\caption{Adaptation--retention trade-off on Qwen3-8B. Each point
represents a method trained on AG-News; the $x$-axis is in-domain
accuracy and the $y$-axis is average OOD retention over MedMCQA,
ARC-Challenge, and LegalBench (100\% = base model). The Engram Adapter is the only adapted method that combines a substantial
in-domain improvement with $\geq$100\% average OOD retention. LoRA variants
achieve higher in-domain accuracy in some settings but lower OOD retention,
while TALR preserves more OOD performance at the cost of little in-domain
improvement. All LoRA variants are $r$=64.}
\label{fig:pareto}
\end{figure}
\paragraph{Training-time controls are insufficient.}
These training-time baselines show that controlling update magnitude
or loss weights does not replace conditional activation. LoRA lr=$10^{-5}$
achieves near-matched in-domain performance but substantially worse LegalBench
retention than Engram Adapter; LoRA lr=$10^{-6}$ learns only modestly in-domain
and remains below the base model on MedMCQA (53.60\% vs.\ 55.51\%); TALR achieves strong OOD retention but
barely improves AG-News (79.57\% vs.\ base 79.49\%).
\section{Conclusion}
\label{sec:conclusion}
We presented the Engram Adapter, a parameter-efficient domain specialization
framework that repurposes DeepSeek's Engram conditional memory module as a
post-hoc adapter for frozen large language models. By combining hash-indexed
local-pattern matching, explicit occupancy tracking, and a learned scalar gate,
the adapter turns domain specialization from an always-on perturbation into a
conditional residual intervention guided by a selectivity prior. Experiments on
Qwen3-4B and Qwen3-8B show that this design improves target-domain performance
while preserving general capabilities across reasoning, translation, code
generation, and legal reasoning benchmarks. Mechanistic analyses provide the
central evidence: OOD inputs can activate the adapter, but the gate and
projection suppress the resulting residuals enough to keep KL drift, prediction
changes, and downstream accuracy stable. These results suggest that
conditional memory adapters are a promising route toward modular domain
specialization over frozen backbones. Direct multi-domain isolation and
composition experiments remain future work.
\section*{Limitations}\label{sec:limitations}
The current experiments span two model scales (Qwen3-4B and Qwen3-8B) and
three evaluation domains (news, medicine, law), but empirical validation on
even larger models (e.g., 27B parameters) and more challenging domain corpora
would further strengthen the findings. Although the shared memory tables keep
the Qwen3-4B Engram Adapter's total parameter count ($\approx$154M) within a
manageable range, it is roughly 1.2$\times$ the size of LoRA $r$=64
($\approx$132M). The
in-domain performance gap relative to LoRA (2--4 percentage points across
experiments) warrants investigation into whether larger training sets,
alternative $n$-gram orders, or tuned $\alpha$ values can close this gap
without sacrificing the observed retention behavior.
\paragraph{Inference overhead.}
Under fixed-length generation on MBPP (128 tokens), the Engram Adapter
incurs $\approx$22\% per-token overhead relative to the frozen base model
on Qwen3-4B. Forward-pass (prefill) overhead on OOD evaluation sets ranges from
$+1.45\%$ to $+39.9\%$. The measured overhead is not monotonic in the
activation rate, because it also depends on input length, batching,
Python-level hash lookup, and memory-access patterns. The current
implementation uses unoptimized Python-level hash lookup; fused GPU kernels
would likely reduce this overhead. Full efficiency and activation profiles
are provided in Appendix~\ref{app:efficiency}.
\paragraph{Artifacts and intended use.}
We use publicly available models, datasets, and benchmarks only for research
training and evaluation, and we cite their creators in the main text and
references. We do not redistribute third-party datasets or base model weights;
any released code or adapters should require users to obtain the underlying
artifacts under their original licenses and terms. Our use is intended to be
consistent with the research and evaluation purposes of these artifacts, and
the method is not presented as a deployable system for high-stakes legal or
medical decision making without additional validation.
\paragraph{Data privacy and content.}
We do not collect new human-subject data, and we rely on the published
documentation of the existing public datasets and benchmarks to understand
their domains and intended use. Some artifacts may include public names or
sensitive domain content, such as news, medical questions, or legal scenarios;
we therefore use only the released task fields, report aggregate metrics, and
do not redistribute raw examples or attempt to identify individuals. We do not
perform additional anonymization beyond the original artifact releases.

\appendix
\section{Additional Architecture and Implementation Details}\label{app:full-architecture}
\subsection{Trainable Parameter Count}
\label{app:param-count}
The trainable parameters consist of the shared memory tables plus per-layer
projection and gate parameters. The shared embedding tables contain:
\[
  \begin{aligned}
  &(|\mathcal{N}| + |\mathcal{N}_{\mathrm{prev}}|)
    \times k \times M \times d_h \\
  &\quad = 3 \times 4 \times 90{,}007 \times 128
    \approx 138\text{M}
  \end{aligned}
\]
parameters in total. Each of the $|\mathcal{L}|=4$ injected layers additionally
maintains a projection $W_{\mathrm{proj}} \in \R^{D \times (k \cdot N_g \cdot
d_h)}$ plus small scalar-gate vectors, contributing $\approx$16M
across all four layers. The total trainable parameter count is thus
$\approx$154M in the Qwen3-4B setting. For comparison, LoRA $r$=64 introduces $\approx$132M parameters
and LoRA $r$=32 introduces $\approx$66M, so the Engram Adapter is roughly
1.2$\times$ the size of the highest-capacity LoRA baseline. However, the memory
tables are highly sparse after training on a small corpus: most of the
1{,}080{,}084 embedding rows across all 12 tables remain at their random
initialization and are masked out by the occupancy flags, so the effective
active parameter count is substantially lower. Moreover, by injecting at only
4 out of 36 layers, the per-layer overhead is reduced by $\approx$89\% compared
to a hypothetical all-layer design.
\section{Implementation and Reproducibility Details}
\label{sec:impl}
\subsection{Key Components}
\begin{description}
  \item[\texttt{EngramConfig}] Dataclass holding all hyperparameters
    (Table~\ref{tab:hparam}).
  \item[\texttt{EngramAdapter}] Core module: hash tables, projection, and gate.
  \item[\texttt{EngramInjectorForTraining}] Attaches adapters to the model at
    layer indices $\mathcal{L} = \{1,3,7,15\}$ via PyTorch forward hooks.
  \item[\texttt{get\_full\_attention\_layer\_ids()}] Identifies layers using full
    attention (vs.\ sliding-window attention); used to validate that all
    selected layer indices are full-attention layers.
  \item[\texttt{collect\_engram\_state\_dict()}] Extracts only Engram parameters
    for saving, leaving base model weights untouched.
\end{description}
\subsection{Hook Mechanism}
Two hooks are registered for the injected layers:
\begin{enumerate}
  \item \textbf{Model pre-hook}: Captures \texttt{input\_ids} and maintains a
    sliding window of \texttt{history\_ids} for autoregressive generation,
    aligned to $\max(\mathcal{N} \cup \mathcal{N}_{\mathrm{prev}})$.
  \item \textbf{Attention pre-hook} (per layer $\ell \in \mathcal{L}$): Runs
    \texttt{EngramAdapter} on hidden states \emph{before} the self-attention
    computation, injecting the residual per \eqref{eq:residual}.
\end{enumerate}
\subsection{Detailed Experimental Setup}
\label{app:exp-setup}
\begin{table*}[t]
\centering
\caption{Experimental setup summary. LoRA and Engram Adapter report mean $\pm$
std across three seeds on 4B; non-LoRA non-Engram 4B baselines and all
reported 8B baselines use seed 7783. For Qwen3-4B, all non-MBPP evaluations
use zero-shot prompting and MBPP uses 3-shot prompting; Qwen3-8B evaluations
use zero-shot prompting.}
\label{tab:setup}
\small
\setlength{\tabcolsep}{3pt}
\begin{tabularx}{\textwidth}{@{}l l c c X X@{}}
\toprule
\textbf{Model} & \textbf{Train task} & \textbf{Train size}
& \textbf{Eval} & \textbf{OOD evaluation} & \textbf{Baselines} \\
\midrule
Qwen3-4B & AG-News & 1{,}000 & 0-shot; MBPP 3-shot
& ARC-C, MedMCQA, FLORES, MBPP
& LoRA $r\in\{32,64\}$ (3 seeds), DoRA, PiSSA, Adapter, $(IA)^3$ \\
Qwen3-4B & MedMCQA & 1{,}000 & 0-shot; MBPP 3-shot
& ARC-C, AG-News, FLORES, MBPP
& LoRA $r\in\{32,64\}$ (3 seeds), DoRA, PiSSA, Adapter, $(IA)^3$ \\
Qwen3-8B & AG-News & 10{,}000 & zero-shot
& MedMCQA, ARC-C, LegalBench
& LoRA $r=64$, LoRA lr=$10^{-5}$, LoRA lr=$10^{-6}$, TALR \\
\bottomrule
\end{tabularx}
\end{table*}
\paragraph{Baselines and seeds.}
For 4B experiments, LoRA \citep{hu2022lora} is evaluated with
$r \in \{32,64\}$, $\alpha_{\mathrm{LoRA}}=2r$, learning rate $10^{-4}$, and
three seeds $\{42,7783,114514\}$. The Engram Adapter is also evaluated with
the same three seeds $\{42,7783,114514\}$. DoRA \citep{liu2024dora}, PiSSA
\citep{meng2024pissa}, Adapter \citep{houlsby2019adapter}, and $(IA)^3$
\citep{liu2022ia3} use seed 7783. For 8B experiments, LoRA $r=64$ with
lr=$10^{-5}$, LoRA $r=64$ with lr=$10^{-6}$, and TALR LoRA $r=64$
\citep{lin2026sft} also use seed 7783; their implementation details are given in
Appendix~\ref{app:8b-baselines}.
\paragraph{Benchmarks and metrics.}
AG-News \citep{zhang2015agnews} is evaluated on the full 7{,}600-example test
set. MedMCQA \citep{pal2022medmcqa} is evaluated on the 4{,}183-example
validation set. ARC-Challenge contains 1{,}172 science reasoning questions;
FLORES-200 contains 1{,}012 translation examples and is scored with BLEU and
chrF; MBPP contains 500 Python programming problems, is evaluated with 3-shot
prompting, and is scored with pass@1. All other Qwen3-4B evaluations use
zero-shot prompting.
For Qwen3-8B, we additionally evaluate 12 LegalBench
\citep{guha2023legalbench} tasks spanning contract analysis, evidence rules,
and jurisdictional reasoning.
\paragraph{Prompt formatting and hardware.}
To keep Engram's local-pattern lookup aligned with the answer-bearing portion
of each classification input, we append a short task instruction at the end of
the corresponding prompts. MedMCQA prompts end with ``Choose the correct answer
from A, B, C, or D based on medical analysis.'', and AG-News prompts end with
``Based on the article, choose its category.'' Final training and evaluation
runs were conducted on a single NVIDIA L20 GPU. 
\subsection{Additional 8B Baseline Details}
\label{app:8b-baselines}
\paragraph{Low learning rate and TALR baselines.}
To test whether forgetting can be mitigated by modulating the \emph{magnitude}
of weight updates rather than their \emph{conditionality}, we include three
additional baselines. LoRA $r$=64 (lr=$10^{-5}$) reduces the learning rate by
one order of magnitude relative to the standard setting ($10^{-4}$) and provides
a near-matched in-domain comparison to Engram Adapter. LoRA $r$=64
(lr=$10^{-6}$) reduces the learning rate by two orders of magnitude, limiting
the scale of $\Delta W$ more aggressively without any structural change. TALR
(Token-Adaptive Loss Reweighting) \citep{lin2026sft} instead modifies the
training objective by reweighting token-level losses, reducing the influence of
hard tokens that are associated with stronger general-capability degradation.
All three methods remain always-on at inference time.
\subsection{Training Configuration}
\begin{table}[h]
\centering
\caption{Hyperparameter defaults.}
\label{tab:hparam}
\scriptsize
\setlength{\tabcolsep}{2pt}
\begin{tabularx}{\linewidth}{@{}>{\ttfamily\scriptsize}p{0.35\linewidth}l>{\raggedright\arraybackslash}X@{}}
\toprule
\textbf{Hyperparameter} & \textbf{Default} & \textbf{Description} \\
\midrule
inject\_layers      & $(1,3,7,15)$ & Layer indices with adapters \\
ngram\_orders       & $(2,3)$      & Suffix $n$-gram sizes \\
prev\_ngram\_orders & $(3,)$       & Previous-context sizes \\
num\_hashes         & $4$          & Hash functions per lookup \\
table\_size         & $90{,}007$   & Buckets per hash table \\
per\_hash\_dim      & $128$ / $256$ & Embedding dim per hash (4B / 8B) \\
mem\_init\_std      & $0.02$       & Std of $\mathcal{N}(0, \sigma^2)$ init \\
num\_heads          & $16$         & Attention heads \\
gate\_type          & scalar       & Gating granularity \\
alpha               & $0.2$        & Residual strength \\
lr                  & $10^{-3}$    & Learning rate \\
grad\_accum\_steps  & $8$          & Gradient accumulation \\
max\_grad\_norm     & $1.0$        & Gradient clipping threshold \\
\bottomrule
\end{tabularx}
\end{table}
\section{Idealized Selectivity-Prior Details}
\label{sec:theory}
\subsection{Preliminaries and Notation}
Let $\model$ denote a frozen base LM and $\adapter$ an Engram Adapter. The
adapter modifies the hidden states at layers $\ell \in \mathcal{L} =
\{1,3,7,15\}$ per \eqref{eq:residual}. Let $\Dtrain$ denote the domain
training distribution and $\Dood$ any distribution disjoint from it. We define
the \emph{forgetting} induced by the adapter for input $x \sim \Dood$ as:
\[
  \Forgetting(x) = \bigl\|\model(x) - (\model + \adapter)(x)\bigr\|_2.
\]
Our goal is to separate two claims. First, before training, the explicit
occupancy mask makes the adapter exactly transparent. Second, after training,
the occupancy-tracked mask provides an idealized selectivity prior under
independent hashing, while practical OOD preservation must be established
empirically through residual and output-distribution analyses.
\subsection{Zero-Training Transparency}
\begin{proposition}
\label{prop:identity}
At initialization, before any training-domain $n$-gram has been recorded in
the occupancy masks, the Engram Adapter induces zero residual at every injected
layer. Consequently, for any input $x$, the adapted model is functionally
identical to the frozen base model and $\Forgetting(x)=0$.
\end{proposition}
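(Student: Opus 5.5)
The plan is to trace the forward pass of Section~\ref{sec:forward-pass} at a single injected layer $\ell\in\mathcal{L}$, show that the residual term in \eqref{eq:residual} vanishes identically, and then lift this to the whole network by induction over layers.

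\textbf{Step 1 (masks vanish).} At initialization every occupancy flag satisfies $F_{n,h}[i]=0$ for all $n$, $h$, and $i\in\{0,\dots,M-1\}$. Whatever bucket $\mathrm{idx}_{n,h}(t)$ the hash \eqref{eq:hash} returns for a given input $x$, position $t$ and channel (suffix or previous-context), we get $\mu_{n,h}(t)=0$. Hence $m_n(t)=\prod_h\mu_{n,h}(t)=0$ by \eqref{eq:joint_mask}.

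\textbf{Step 2 (residual vanishes).} From Step 1, every filtered chunk is $\tilde v_{n,h}(t)=0\cdot v_{n,h}(t)=0$, even though the embeddings $E_{n,h}$ are random $\mathcal{N}(0,0.02^2)$ and nonzero. This is the one place the explicit occupancy buffer matters, as opposed to a norm test. Therefore $\bm{c}(t)=0$ and $\mathrm{mem}(t)=W_{\mathrm{proj}}\bm{c}(t)=0$. As a redundant second reason, $W_{\mathrm{proj}}$ is itself zero-initialized.

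The gate in \eqref{eq:gate} does not vanish: it equals $\sigma(0)=1/2$, because both its linear term and its dot-product term involve $\bm{c}(t)$ or $\mathrm{mem}(t)$. It is, however, bounded and finite. So $\alpha\, g^{(\ell)}(t)\,\mathrm{mem}^{(\ell)}(t)=0$ and $u^{(\ell)}_{\mathrm{new}}=u^{(\ell)}$ at every position, every batch element, and every $\ell\in\mathcal{L}$. Layers $\ell\notin\mathcal{L}$ are unmodified by construction.

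\textbf{Step 3 (functional identity).} The hooks only replace $u^{(\ell)}$ by $u^{(\ell)}_{\mathrm{new}}$. I would argue by induction on layer index that all hidden states of $(\model+\adapter)(x)$ coincide with those of $\model(x)$. The outputs are therefore equal and $\Forgetting(x)=0$.

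For autoregressive generation, the same argument applies at each decoding step. The history window built by the pre-hook affects only which buckets are queried, and every bucket has flag zero. The proof is almost definitional; the main point needing care is to state explicitly that the flags are never set at initialization, i.e.\ no forward pass has occurred in training mode. Otherwise the updates $F\leftarrow 1$ would break the premise. I would phrase the proposition as holding for any number of inference-mode calls before training.
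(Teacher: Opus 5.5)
Your proposal is correct and follows essentially the same route as the paper's proof: all flags $F_{n,h}[i]=0$ force $\mu_{n,h}(t)=0$, hence $m_n(t)=0$, $\bm{c}(t)=\mathbf{0}$, $\mathrm{mem}(t)=\mathbf{0}$, and zero residual at every injected layer, so all hidden states and logits coincide with the frozen base model. Your additional remarks are consistent refinements rather than a different argument: the gate equals $\sigma(0)$ but multiplies a zero vector, the zero-initialized $W_{\mathrm{proj}}$ gives a redundant second reason, you make the layer induction explicit, and you note that no training-mode pass may have set flags.
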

\begin{proof}
All occupancy flags are initialized to $F_{n,h}[i]=0$ for all $n,h,i$.
Therefore, for any position $t$ and any hash table,
\[
\mu_{n,h}(t)=F_{n,h}[\mathrm{idx}_{n,h}(t)] = 0 .
\]
Since the joint mask for each order group is defined as the product over all
hash-specific occupancy indicators, we have
\[
m_n(t)=\prod_{h=0}^{k-1}\mu_{n,h}(t)=0 .
\]
Thus every retrieved chunk is masked out:
\[
\tilde{v}_{n,h}(t)=m_n(t)\,v_{n,h}(t)=\mathbf{0},
\]
regardless of the values stored in the randomly initialized embedding tables
$E_{n,h}$. The concatenated memory vector is therefore
$\bm{c}(t)=\mathbf{0}$, which gives
\[
\mathrm{mem}(t)=W_{\mathrm{proj}}\bm{c}(t)=\mathbf{0}.
\]
The residual injected at any adapted layer $\ell\in\mathcal{L}$ is
\[
\Delta u^{(\ell)}(t)=\alpha\cdot g^{(\ell)}(t)\cdot \mathrm{mem}^{(\ell)}(t)
=\mathbf{0},
\]
and hence $u_{\mathrm{new}}^{(\ell)}=u^{(\ell)}$. Layers
$\ell\notin\mathcal{L}$ are unmodified by construction. Therefore every hidden
state and output logit of the adapted model matches the frozen base model
exactly, so $\Forgetting(x)=0$ for all inputs $x$.
\end{proof}
\begin{remark}[Scope of the transparency guarantee]
\label{rem:zero_training_scope}
Proposition~\ref{prop:identity} is an exact initialization-time statement. It
does not claim that the adapter remains transparent after domain training. Once
training-domain $n$-grams have set occupancy flags to one, OOD inputs may
trigger false activations through hash collisions or shared local patterns.
Post-training OOD preservation is therefore not guaranteed by zero-training
transparency alone; it depends on the combination of mask selectivity, learned
gate attenuation, and projection-level residual suppression, which we analyze
empirically in Section~\ref{sec:activation}.
\end{remark}
\begin{remark}[Advantage over zero initialization]
\label{rem:filled_vs_zero}
An alternative design initializes all memory embeddings to $\mathbf{0}$ and
uses $\mu_{n,h}(t)=\1[\|v_{n,h}(t)\|_1>0]$ as the activation test. While this
can also make the adapter transparent before training, it entangles membership
tracking with embedding values. In particular, a bucket whose embedding remains
or becomes numerically close to zero may be treated as inactive, creating
spurious false negatives for previously observed training patterns. By contrast,
the explicit occupancy mask separates membership from representation learning:
$F_{n,h}$ records whether a bucket has been accessed by a training $n$-gram,
while $E_{n,h}$ stores the learned residual content. Since the occupancy flag is
monotone during training ($0\to1$ and never reset), exact repeats of recorded
training $n$-grams are not lost merely because their embedding values are small.
\end{remark}
\subsection{Idealized False Positive Analysis}
Let $\mathcal{N}_{\mathrm{train}}$ denote the set of unique $n$-grams
observed across all orders during domain training. After training, each such
$n$-gram has set $F_{n,h}[\mathrm{idx}_{n,h}] = 1$ in one bucket for each
hash table.
\begin{definition}[Fill rate]
The fill rate of hash table $(n,h)$ is the fraction of buckets with
$F_{n,h}[i] = 1$. Since each of the $|\mathcal{N}^{(n)}_{\mathrm{train}}|$
unique $n$-grams fills one bucket per hash, the expected fill rate is
$\rho_n = |\mathcal{N}^{(n)}_{\mathrm{train}}| / M$ (assuming no hash
collisions; with collisions the actual fill rate is lower, making this a
conservative fill-rate estimate).
\end{definition}
\begin{theorem}[Idealized false positive reference rate]
\label{thm:fpr}
Under uniform hashing and independence across the $k$ hash functions, an input
$n$-gram $q \notin \mathcal{N}^{(n)}_{\mathrm{train}}$ has joint-mask false
activation probability:
\[
  \begin{aligned}
  &P\bigl(m_n(t) = 1
    \;\big|\; q \notin \mathcal{N}^{(n)}_{\mathrm{train}}\bigr) \\
  &\quad \leq \rho_n^k .
  \end{aligned}
\]
This is an idealized reference rate for the stated assumptions, not a
distribution-free guarantee for natural language data.
\end{theorem}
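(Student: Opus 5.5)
The plan is to reduce the joint-mask event to an intersection of $k$ per-hash events and bound each factor by the fill rate. Fix an order group $n$ and an input $n$-gram $q \notin \mathcal{N}^{(n)}_{\mathrm{train}}$. By the definition of the joint mask~\eqref{eq:joint_mask}, $m_n(t)=1$ holds exactly when $\mu_{n,h}(t)=F_{n,h}[\mathrm{idx}_{n,h}(q)]=1$ for every $h\in\{0,\ldots,k-1\}$. So
\[
P\bigl(m_n(t)=1 \mid q\notin\mathcal{N}^{(n)}_{\mathrm{train}}\bigr)
= P\Bigl(\textstyle\bigcap_{h=0}^{k-1}\{F_{n,h}[\mathrm{idx}_{n,h}(q)]=1\}\Bigr).
\]
Under the stated independence of the $k$ hash functions, this probability factorizes into $\prod_h P(F_{n,h}[\mathrm{idx}_{n,h}(q)]=1)$.

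The second step bounds each factor by $\rho_n$. Table $(n,h)$ has its occupied buckets set by the images of the training $n$-grams under hash $h$. Since $q$ is not a training $n$-gram, uniform hashing makes $\mathrm{idx}_{n,h}(q)$ uniform on $\{0,\ldots,M-1\}$ and independent of those images. The probability of landing in an occupied bucket is therefore the realized fraction of filled buckets. That fraction is at most $|\mathcal{N}^{(n)}_{\mathrm{train}}|/M=\rho_n$, because each training $n$-gram fills at most one bucket per table, and collisions can only reduce the count. This is the sense in which the Definition calls $\rho_n$ conservative. Taking the product over $h$ gives the bound $\rho_n^k$.

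The main obstacle is making the probability space precise. Specifically, the point needing care is the fact that the realized occupancy of table $h$ is random through the hashes of the training set, while $q$'s bucket comes from the same hash function. I would condition on the hash values of the training $n$-grams, or equivalently on the occupancy sets $F_{n,h}$. Under the idealized model, the hash of a distinct key $q$ is then uniform and independent of this conditioning, which is a simple-uniform-hashing assumption. Cross-hash independence then lets the conditional probability factor as $\prod_h |F_{n,h}|/M \le \rho_n^k$ pointwise. Taking expectations preserves the bound.

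I would close by noting which step uses which assumption. Uniformity is what lets each factor equal the fill fraction, and independence is what lets the factors multiply. Neither holds exactly for the deterministic linear hash~\eqref{eq:hash} on natural-language token IDs, which is why the statement is only an idealized reference rate.
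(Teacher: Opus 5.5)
Your proposal is correct and follows the paper's proof. Both arguments bound each per-hash event $\mu_{n,h}(t)=1$ by the fill rate $\rho_n$ using uniform hashing, then multiply over the $k$ hashes using independence. Your extra step, conditioning on the training-set hash images so that the bound holds pointwise before taking expectations, makes precise the probability space the paper leaves implicit; it does not change the route.
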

\begin{proof}
A false positive at hash $h$ occurs if $\mathrm{idx}_{n,h}(q)$ maps to a
bucket that was filled by some training $n$-gram, i.e.,
$F_{n,h}[\mathrm{idx}_{n,h}(q)] = 1$. Under the idealized assumption that
hash outputs are uniformly distributed over $[M]$:
\[
  \begin{aligned}
  &P\bigl(\mu_{n,h}(t) = 1
    \;\big|\; q \notin \mathcal{N}^{(n)}_{\mathrm{train}}\bigr) \\
  &\quad \leq
    \frac{|\mathcal{N}^{(n)}_{\mathrm{train}}|}{M}
    = \rho_n .
  \end{aligned}
\]
Under the independence assumption across the $k$ hash outputs, the joint-mask
probability factorizes as:
\[
  \begin{aligned}
  P(m_n(t) = 1)
  &= \prod_{h=0}^{k-1} P(\mu_{n,h}(t)=1) \\
  &\leq \rho_n^k .
  \end{aligned}
\]
This proves the stated idealized reference rate.
\end{proof}
\begin{corollary}
For $|\mathcal{N}^{(n)}_{\mathrm{train}}| = 10{,}000$, $M = 90{,}007$, and $k=4$:
\[
  \rho_n^k = \left(\frac{10{,}000}{90{,}007}\right)^{\!4} \approx 1.52 \times 10^{-4}.
\]
\end{corollary}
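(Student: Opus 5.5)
The corollary follows from Theorem~\ref{thm:fpr} by substituting the stated parameters. First, using the fill-rate definition, we set $\rho_n = |\mathcal{N}^{(n)}_{\mathrm{train}}|/M = 10{,}000/90{,}007$. This uses the no-collision convention of the definition. As noted there, collisions can only lower the true fill rate, so this value is an upper bound and the reference bound $\rho_n^k$ stays valid (indeed conservative) under Theorem~\ref{thm:fpr}.

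Second, we evaluate the ratio, $\rho_n \approx 0.111102$, and raise it to the $k$-th power with $k=4$, computing in two squarings:
\[
\rho_n^2 \approx 0.0123436, \qquad \rho_n^4 = \bigl(\rho_n^2\bigr)^2 \approx 1.5236\times 10^{-4}.
\]
This rounds to $1.52\times 10^{-4}$. To show the rounding is stable, one can bracket $\rho_n$ between $0.11110$ and $0.11111$. Since $x\mapsto x^4$ is monotone on $[0,1]$, the fourth powers at the two endpoints, about $1.5235\times10^{-4}$ and $1.5242\times10^{-4}$, bound the result, and both round to $1.52\times10^{-4}$.

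There is no substantive obstacle, since the statement is a numerical instantiation. The only care needed is interpretive: the number is the \emph{idealized} reference rate from Theorem~\ref{thm:fpr}, which rests on uniform and independent hashing. Section~\ref{sec:activation} shows empirically that the independence assumption fails, so this value is not a guarantee for natural-language inputs.
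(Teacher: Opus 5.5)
Your proposal is correct and takes the same approach as the paper, which gives no separate proof and simply substitutes $\rho_n = 10{,}000/90{,}007$ into the reference rate $\rho_n^k$ of Theorem~\ref{thm:fpr}. A few trailing digits are slightly off: since $\rho_n \approx 0.1111025$, one gets $\rho_n^2 \approx 0.0123438$, $\rho_n^4 \approx 1.5237\times10^{-4}$, and $0.11111^4 \approx 1.5241\times10^{-4}$. None of these slips affects your bracketing or the rounding to $1.52\times10^{-4}$.
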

\begin{remark}[Independence assumption]
\label{rem:independence}
Theorem~\ref{thm:fpr} assumes that the $k$ hash functions produce independent
outputs. In practice, all hash functions share the same token-ID input space,
and natural language $n$-gram distributions are highly non-uniform. As we show
empirically in Section~\ref{sec:activation}, OOD inputs still produce
non-negligible activations, so $\rho^k$ should not be treated as a formal
bound. It serves only as an idealized reference rate that illustrates how
increasing $k$ can reduce false activations in the independent-hashing limit.
The learned scalar gate (Section~\ref{sec:gate}) provides the primary
practical defense against OOD perturbation.
\end{remark}
\subsection{Gate as the Primary Defense Layer}
\label{sec:gate}
As shown empirically in Section~\ref{sec:activation}, OOD inputs trigger
non-negligible joint activation, meaning the occupancy mask alone does not
suppress all OOD contributions. The learned scalar gate~\eqref{eq:gate} serves
as the \emph{primary} defense mechanism.
When a false-positive activation occurs, the retrieved embeddings encode
domain-specific patterns trained on unrelated $n$-grams. For $u \sim \Dood$,
the weighted dot-product coherence term satisfies:
\[
  \frac{(W_u \odot u(t))^\top \mathrm{mem}(t)}{\sqrt{D}} \approx 0
\]
in expectation, following the high-dimensional heuristic that $u$ and
$W_{\mathrm{proj}}\,\bm{c}(t)$ are approximately orthogonal
\citep{vershynin2018highdim}. This reduces
the explicit coherence contribution from the gate; the learned linear term
$W_m\bm{c}(t)$ and the projected memory $\mathrm{mem}(t)$ then determine the
actual false-positive residual. The projected memory itself carries incoherent
signal that tends to cancel across the $k \times N$ retrieved chunks. The
combination of reduced gate values and incoherent
projections explains why the Engram Adapter achieves $\geq$99.4\% OOD retention
despite non-negligible activation rates: even when the mask admits a false
positive, the gate and projection layers attenuate its impact to a level
that is small in our residual and output measurements. Ablation experiments
(Section~\ref{sec:activation}) further show that the gate provides a
$\approx$1.8$\times$ suppression factor, with the learned linear terms
$W_m\bm{c}(t)$ accounting for nearly all of this effect.
\subsection{Idealized OOD Residual Bound}
\label{sec:idealized_bound}
The independence analysis above does not imply that the adapter is
post-training transparent on OOD inputs. After training, OOD examples may still
activate the occupancy mask through hash collisions or shared local patterns.
We therefore use the independent-hashing assumption only to derive an idealized
bound on the expected \emph{adapter residual}, rather than a distribution-free
guarantee on final model outputs.
Let the residual injected at position $t$ and layer $\ell$ be
\[
  \Delta u^{(\ell)}(t)
  =
  \alpha \cdot g^{(\ell)}(t) \cdot \mathrm{mem}^{(\ell)}(t),
\]
and define the per-example OOD residual proxy
\[
  \mathcal{R}(x)
  =
  \frac{1}{T}
  \sum_{t=1}^{T}
  \sum_{\ell \in \mathcal{L}}
  \|\Delta u^{(\ell)}(t)\|_2 .
\]
This proxy measures the total magnitude of the adapter intervention before it
propagates through the remaining transformer computation.
\begin{proposition}[Idealized OOD residual bound]
\label{prop:combined}
Consider an OOD input $x \sim \Dood$ whose queried $n$-grams are not exact
members of the training $n$-gram set. Suppose that the $k$ hash outputs are
independent and that each order group has fill rate at most $\rho$. For an
adapter injected at $|\mathcal{L}|$ layers with $N$ order groups, we have
\[
  \E\bigl[\mathcal{R}(x)\bigr]
  \;\leq\;
  |\mathcal{L}| \cdot \alpha \cdot N \cdot \rho^k
  \cdot \|\mathrm{mem}_{\mathrm{fp}}\|_{\infty},
\]
where $\|\mathrm{mem}_{\mathrm{fp}}\|_{\infty}$ denotes an upper bound on the
norm of the projected memory vector produced by a false-positive activation.
\end{proposition}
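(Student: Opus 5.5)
The plan is to bound the residual position by position and layer by layer, then combine with linearity of expectation. By linearity,
\[
\E[\mathcal{R}(x)] = \frac{1}{T}\sum_{t=1}^{T}\sum_{\ell\in\mathcal{L}} \E\bigl[\|\Delta u^{(\ell)}(t)\|_2\bigr],
\]
so it suffices to show that for every position $t$ and every injected layer $\ell$,
\[
\E\bigl[\|\Delta u^{(\ell)}(t)\|_2\bigr] \le \alpha N \rho^k \|\mathrm{mem}_{\mathrm{fp}}\|_\infty .
\]
Averaging over $t$ and summing over the $|\mathcal{L}|$ layers then gives the claim directly.

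For a fixed $(t,\ell)$, first use $g^{(\ell)}(t)\in[0,1]$ to get $\|\Delta u^{(\ell)}(t)\|_2 \le \alpha\,\|\mathrm{mem}^{(\ell)}(t)\|_2$. This discards any gate attenuation, which is fine for an upper bound. Next, decompose the concatenated vector $\bm{c}(t)$ into its $N$ order-group blocks, $\bm{c}(t)=\sum_{n}\bm{c}_n(t)$, where $\bm{c}_n(t)$ is zero outside group $n$'s coordinates and equals $m_n(t)\,[v_{n,h}(t)]_h$ inside them. By linearity of $W_{\mathrm{proj}}$ and the triangle inequality,
\[
\|\mathrm{mem}(t)\|_2 \le \sum_n m_n(t)\,\|W_{\mathrm{proj}}\bm{c}_n^{\mathrm{raw}}(t)\|_2 ,
\]
where $\bm{c}_n^{\mathrm{raw}}$ is the unmasked block. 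I interpret $\|\mathrm{mem}_{\mathrm{fp}}\|_\infty$ as a uniform bound on the projected contribution of any activated group under a false positive. This is the natural reading of the statement, and I will state it explicitly, since the statement only says ``projected memory vector produced by a false-positive activation.'' Under that reading, $\|\mathrm{mem}(t)\|_2 \le \|\mathrm{mem}_{\mathrm{fp}}\|_\infty \sum_n m_n(t)$. If instead one reads the bound as covering the full vector, the same argument goes through using $\mathbf{1}[\exists n: m_n(t)=1]\le\sum_n m_n(t)$.

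Taking expectations then gives $\E\|\mathrm{mem}(t)\|_2 \le \|\mathrm{mem}_{\mathrm{fp}}\|_\infty \sum_n P(m_n(t)=1)$. Because the queried $n$-grams of $x$ are not training members, Theorem~\ref{thm:fpr} applies to each group, giving $P(m_n(t)=1)\le\rho_n^k\le\rho^k$ under the independence and fill-rate assumptions. Summing over the $N$ groups yields $N\rho^k$, which completes the per-position bound.

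The main obstacle is the interface between the random mask and the norm bound. The memory content on a false positive is itself random, since it depends on which bucket was hit, so the expectation does not factor automatically. Treating $\|\mathrm{mem}_{\mathrm{fp}}\|_\infty$ as a deterministic almost-sure bound, conditional on activation, sidesteps this. The bound is finite because the occupied rows of $E_{n,h}$ form a finite set and $W_{\mathrm{proj}}$ is fixed after training. A secondary subtlety is the previous-context group, whose $n$-gram may be an exact training member even when the suffix $n$-gram is not. The hypothesis that all queried $n$-grams of $x$ are non-members is what excludes this case, and I will note that it is used there.
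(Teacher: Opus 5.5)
Your proposal is correct and follows essentially the same route as the paper: bound the gate by $1$, use the almost-sure bound $\|\mathrm{mem}_{\mathrm{fp}}\|_\infty$ on the projected memory under a false positive, apply Theorem~\ref{thm:fpr} to each order group with a union bound over the $N$ groups, and sum over the $|\mathcal{L}|$ layers. The paper uses your ``full vector'' reading and bounds $\Pr(A_t)\le N\rho^k$ directly instead of splitting $\bm{c}(t)$ into per-group blocks, and it leaves the averaging over $t$ implicit, which you make explicit through linearity of expectation.
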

\begin{proof}
For a fixed token position and order group, Theorem~\ref{thm:fpr} gives
\[
  \Pr[m_n(t)=1] \leq \rho^k
\]
under independent hashing. By the union bound over $N$ order groups,
letting $A_t$ denote activation of any order group at position $t$,
\[
  \Pr(A_t)
  \leq N\rho^k .
\]
If no order group activates, all filtered chunks are zero, so
$\mathrm{mem}^{(\ell)}(t)=\mathbf{0}$ and
$\Delta u^{(\ell)}(t)=\mathbf{0}$. If a false-positive activation occurs, then
by definition
\[
  \|\mathrm{mem}^{(\ell)}(t)\|_2
  \leq
  \|\mathrm{mem}_{\mathrm{fp}}\|_{\infty}.
\]
Since $g^{(\ell)}(t)\in[0,1]$, the injected residual at one layer is bounded by
\[
  \|\Delta u^{(\ell)}(t)\|_2
  \leq
  \alpha \cdot \|\mathrm{mem}_{\mathrm{fp}}\|_{\infty}.
\]
Taking expectation over the false-positive activation event and summing over
$|\mathcal{L}|$ injected layers gives
\[
  \E[\mathcal{R}(x)]
  \leq
  |\mathcal{L}| \cdot \alpha \cdot N \cdot \rho^k
  \cdot \|\mathrm{mem}_{\mathrm{fp}}\|_{\infty}.
\]
\end{proof}
For our default configuration, $|\mathcal{L}|=4$, $N=3$, $k=4$,
$\rho \approx 0.222$, and $\alpha=0.2$, yielding the idealized estimate
\[
  |\mathcal{L}| \cdot \alpha \cdot N \cdot \rho^k
  \approx
  5.8 \times 10^{-3}.
\]
Thus, under independent hashing, the expected OOD residual proxy is bounded by
approximately
\[
  5.8 \times 10^{-3}
  \cdot
  \|\mathrm{mem}_{\mathrm{fp}}\|_{\infty}.
\]
\begin{remark}[Scope of the bound]
\label{rem:bound_scope}
Proposition~\ref{prop:combined} is an idealized residual-level analysis, not a
strict guarantee on $\Forgetting(x)$ or downstream accuracy. Bounding final
model outputs would require additional assumptions on the Lipschitz behavior of
the remaining transformer layers, which would be loose and difficult to verify
for large language models. We therefore use this proposition only to motivate
the role of the occupancy-tracked joint mask. In practice, the independence
assumption is violated by correlated hash inputs and non-uniform natural
language distributions, so empirical OOD activation can exceed $\rho^k$.
Post-training OOD preservation is consequently established by the residual and
output-distribution analyses in Section~\ref{sec:activation}, where the learned
gate and projection attenuation suppress the actual OOD residual to a negligible
fraction of the hidden-state norm.
\end{remark}
\subsection{Formal Comparison with LoRA}
Table~\ref{tab:comparison} summarizes the structural differences between LoRA
and the Engram Adapter.
\begin{table}[h]
\centering
\caption{Structural comparison of LoRA and Engram Adapter.}
\label{tab:comparison}
\small
\setlength{\tabcolsep}{3pt}
\begin{tabularx}{\linewidth}{@{}p{0.25\linewidth}p{0.27\linewidth}X@{}}
\toprule
\textbf{Property} & \textbf{LoRA} & \textbf{Engram Adapter} \\
\midrule
Adapted path & Always-on $\Delta W=BA$ & Additive residual over frozen $\theta$ \\
Activation condition & Always on & Occupancy-conditioned $n$-gram matches \\
OOD behavior & Full $\Delta W$ applied & Gate-attenuated ($<$0.1\% residual) \\
Retention mechanism & None (empirical only) & Gate + mask (empirical) \\
False positive control & N/A & Selectivity prior tunable via $k$ and $M$ \\
Composability & Requires merging / switching & Separate adapter module over frozen backbone \\
\bottomrule
\end{tabularx}
\end{table}
\begin{remark}
Without additional regularization or routing, LoRA exposes
$\Delta W = BA$ to every input. The Engram Adapter's mask provides a
stronger selectivity prior under the independent-hashing idealization, but real
language distributions require empirical validation through residual and output
analyses.
\end{remark}
\section{Full Retention and Additional Results}\label{app:full-results}
\subsection{Analysis}
\label{sec:analysis}
\paragraph{Retention preservation.}
The Engram Adapter achieves strong and stable retention across all OOD
benchmarks in both 4B experiments. In the AG-News experiment, average OOD
retention is 99.38\%$\pm$0.32\% across ARC-Challenge
(100.52\%$\pm$0.53\%), MedMCQA (99.84\%$\pm$0.38\%), FLORES-200 BLEU
(99.44\%$\pm$1.12\%), and MBPP (97.70\%$\pm$0.75\%). In the MedMCQA
experiment, the Engram Adapter achieves average retention of
100.07\%$\pm$0.24\% across ARC-Challenge (99.42\%$\pm$0.38\%), AG-News
(101.37\%$\pm$0.62\%), FLORES-200 BLEU (100.13\%$\pm$0.18\%), and MBPP
(99.35\%$\pm$0.38\%). These results are consistent with the
selectivity-prior picture, but the retention claim itself is established by
the residual and output analyses in Section~\ref{sec:activation}: the
adapter's OOD perturbation is negligible for practical purposes.
On 8B, the retention pattern is consistent: MedMCQA and ARC-Challenge retention
both exceed 99.7\%, and LegalBench average retention is 100.9\%, yielding
101.0\% average OOD retention across the three 8B OOD benchmarks. The
cross-scale consistency suggests that the same selectivity-plus-attenuation
mechanism extends across model sizes.
In contrast, LoRA exhibits consistent MBPP degradation across all six MedMCQA
configurations on 4B, with pass@1 retention ranging from 95.21\% (LoRA $r$=64,
seed 42) to 97.60\% (LoRA $r$=32, seeds 42 and 114514). In the AG-News
experiment, forgetting is even more severe: LoRA $r$=64 with seed 42 drops
MBPP from 66.80\% to 39.40\%---a 41\% relative degradation. On 8B, LoRA's
forgetting is most pronounced on LegalBench, where average accuracy drops by
27.2 absolute points. The intermediate learning rate variant (lr=$10^{-5}$)
nearly matches Engram Adapter in-domain (86.96\% vs.\ 86.47\%) but reaches only
96.0\% average OOD retention, driven primarily by a LegalBench drop to 61.08\%.
The more conservative low learning rate variant (lr=$10^{-6}$) achieves
97.2\% average OOD retention, with MedMCQA slightly below the base model
(53.60\% vs.\ 55.51\%). TALR achieves strong non-LegalBench OOD retention (101.9\%
across MedMCQA and ARC-Challenge, 98.2\% when LegalBench is included) but with
only marginal in-domain improvement (79.57\% vs.\ base 79.49\%). This
suggests that LoRA's always-on weight perturbation
$\Delta W = BA$ can substantially disrupt capabilities across diverse domains
and model scales, and that neither learning rate reduction nor token-adaptive
loss reweighting simultaneously achieves both near-matched adaptation and
reliable retention in this low-resource LoRA setting.
\paragraph{Other PEFT methods can exhibit severe forgetting.}
PiSSA exhibits the most extreme OOD degradation among all baselines despite
not always failing in-domain: when trained on MedMCQA, it reaches 55.63\%
in-domain accuracy but drops AG-News to 0.00\% and ARC-Challenge to 27.30\%;
when trained on AG-News, it reduces MedMCQA to 32.23\% and ARC-Challenge to
19.88\%. We note that PiSSA was run with the same learning rate as LoRA
($10^{-4}$), which may be suboptimal for PiSSA's principal-subspace
reparameterization; a dedicated hyperparameter sweep could yield substantially
better results and we therefore treat its collapse as a configuration artifact
rather than a fundamental limitation of the method. $(IA)^3$ trained on MedMCQA degrades ARC-Challenge from 88.99\% to
62.88\% (29.3\% relative loss). DoRA achieves competitive in-domain
performance (90.30\% on AG-News, 57.28\% on MedMCQA) and preserves reasoning
ability (ARC-Challenge 88.14\%/88.31\%), but Adapter, while strong on
in-domain metrics, also degrades OOD benchmarks. These results underscore that
unconditional parameter modification across multiple PEFT method families can
lead to severe and unpredictable capability degradation, even with only 1{,}000
training examples.
\paragraph{Seed sensitivity of LoRA.}
The MedMCQA experiment, with three seeds per rank, provides a more robust
assessment of LoRA's seed sensitivity. At $r$=32, in-domain accuracy varies
from 56.71\% to 57.30\% across seeds---a modest 0.59-point range. However,
OOD metrics also fluctuate: ARC-Challenge ranges from 87.96\% to 88.31\%, and
MBPP from 65.00\% to 65.20\%. At $r$=64, the variance is similar: in-domain
accuracy ranges from 54.15\% to 54.84\%, while MBPP ranges from 63.60\% to
64.40\%. In the AG-News experiment, seed sensitivity is far more dramatic: at
$r$=64, MBPP retention ranges from 58.98\% to 97.31\%---a 38-point gap. The
Engram Adapter's OOD behavior is substantially more stable across seeds: in
the AG-News experiment, MBPP retention ranges from 97.01\% to 98.50\% (a
1.5-point range), and in the MedMCQA experiment from 98.95\% to 99.70\% (a
0.8-point range). This stability is attributable to the occupancy mask
constraining activation and the gate/residual path attenuating perturbations.
\paragraph{Rank--forgetting tradeoff in LoRA.}
Reducing LoRA's rank from 64 to 32 consistently mitigates forgetting. In the
MedMCQA experiment, the average OOD retention across three seeds improves from
99.10\%--99.56\% ($r$=64) to 99.64\%--99.77\% ($r$=32). In the AG-News
experiment, average retention similarly improves from 88.31\%--99.51\% ($r$=64)
to 98.66\%--99.81\% ($r$=32). The Engram Adapter (99.38\%$\pm$0.32\% on
AG-News, 100.07\%$\pm$0.24\% on MedMCQA) achieves the highest average
retention in both experiments---tying with LoRA $r$=32 on AG-News
(99.38\%$\pm$0.51\%) while exhibiting lower variance, and leading on MedMCQA.
Notably, unlike LoRA $r$=32 whose AG-News average retention is partly inflated
by MedMCQA positive transfer (102.32\%), the Engram Adapter's retention is
more uniformly
distributed across all OOD benchmarks, with no single benchmark exhibiting
severe degradation. The Engram Adapter achieves consistently high retention
across both experiments without requiring rank selection or seed tuning.
\paragraph{Domain adaptation effectiveness.}
In the 4B AG-News experiment, the Engram Adapter achieves
87.93\%$\pm$0.64\% accuracy, improving over the base model (85.14\%) by
2.79 percentage points but lagging the best adapted model (Adapter, 90.54\%)
by 2.61 points. On 8B, the in-domain improvement
is substantially larger: 86.47\% vs.\ base 79.49\% (+6.98pp), narrowing the
relative gap. In the MedMCQA experiment, the
Engram Adapter achieves 55.79\%$\pm$0.74\% accuracy (+2.14 over base), while
the best LoRA achieves 57.30\% ($r$=32, seed 114514, +3.65 over base). The
in-domain gap relative to the best LoRA reflects the fundamental
specialization--preservation tradeoff: the conditional activation mechanism
limits the adapter's influence to recognized $n$-gram patterns.
\paragraph{Cross-domain consistency.}
A key finding is that the Engram Adapter's retention behavior holds
consistently across both model scales (4B and 8B), both domain adaptation tasks,
and all OOD benchmarks,
covering reasoning (ARC-Challenge), text classification (AG-News),
translation (FLORES-200), code generation (MBPP), and legal reasoning
(LegalBench). This consistency supports the selectivity-prior interpretation:
the occupancy-tracked matching mechanism provides a domain-agnostic inductive bias that is
not specific to the training domain or model scale, while the
residual/output analyses explain why false activations do not translate into
behavioral forgetting.

\subsection{Retention Summary}
Tables~\ref{tab:retention_agnews} and \ref{tab:retention_medmcqa} report the
average OOD retention for each 4B experiment.
\begin{table*}[t]
\centering
\caption{Average OOD retention (\%) for AG-News domain adaptation (4B).
LoRA and Engram Adapter results are mean $\pm$ std across three seeds.
Higher is better; 100\% = perfect preservation.}
\label{tab:retention_agnews}
\small
\begin{tabular}{@{}lccccc@{}}
\toprule
\textbf{Method} & \textbf{ARC-C} & \textbf{MedMCQA} & \textbf{FLORES (BLEU)} & \textbf{MBPP} &
  \textbf{Avg.\ Ret.} \\
\midrule
LoRA $r\!=\!64$
  & \underline{99.04{\tiny$\pm$0.55}} & 100.12{\tiny$\pm$0.62} & 97.01{\tiny$\pm$1.48}
  & 84.83{\tiny$\pm$18.28} & 95.25{\tiny$\pm$4.95} \\
LoRA $r\!=\!32$
  & 99.01{\tiny$\pm$0.43} & \textbf{102.32{\tiny$\pm$0.50}} & 98.98{\tiny$\pm$0.62}
  & \underline{97.21{\tiny$\pm$1.25}} & \textbf{99.38{\tiny$\pm$0.51}} \\
Adapter
  & 98.75 & 99.55 & \underline{100.20} & 83.83 & 95.58 \\
DoRA $r\!=\!32$
  & \underline{99.04} & \underline{101.73} & \textbf{100.77} & 89.52 & \underline{97.77} \\
PiSSA $r\!=\!32$
  & 22.34 & 60.07 & 0.00 & 0.00 & 20.60 \\
$(IA)^3$
  & 96.65 & 99.27 & 71.09 & 84.43 & 87.87 \\
\midrule
\textbf{Engram Adapter}
  & \textbf{100.52{\tiny$\pm$0.53}} & 99.84{\tiny$\pm$0.38} & 99.44{\tiny$\pm$1.12}
  & \textbf{97.70{\tiny$\pm$0.75}} & \textbf{99.38{\tiny$\pm$0.32}} \\
\bottomrule
\end{tabular}
\end{table*}
\begin{table*}[t]
\centering
\caption{Average OOD retention (\%) for MedMCQA domain adaptation (4B).
LoRA and Engram Adapter results are mean $\pm$ std across three seeds.
Higher is better; 100\% = perfect preservation.}
\label{tab:retention_medmcqa}
\small
\begin{tabular}{@{}lccccc@{}}
\toprule
\textbf{Method} & \textbf{ARC-C} & \textbf{AG-News} & \textbf{FLORES (BLEU)} &
  \textbf{MBPP} & \textbf{Avg.\ Ret.} \\
\midrule
LoRA $r\!=\!64$
  & 99.14{\tiny$\pm$0.21} & \underline{101.36{\tiny$\pm$0.28}} & \underline{100.69{\tiny$\pm$0.96}}
  & 95.91{\tiny$\pm$0.51} & 99.28{\tiny$\pm$0.20} \\
LoRA $r\!=\!32$
  & 99.04{\tiny$\pm$0.16} & 101.16{\tiny$\pm$0.14} & \textbf{101.06{\tiny$\pm$0.34}}
  & \underline{97.50{\tiny$\pm$0.14}} & \underline{99.69{\tiny$\pm$0.06}} \\
Adapter
  & 98.85 & 99.38 & 97.92 & 84.43 & 95.15 \\
DoRA $r\!=\!32$
  & \underline{99.23} & 99.55 & 100.23 & 85.33 & 96.09 \\
PiSSA $r\!=\!32$
  & 30.68 & 0.00 & 0.00 & 0.00 & 7.67 \\
$(IA)^3$
  & 70.66 & 99.34 & 70.10 & 86.83 & 81.74 \\
\midrule
\textbf{Engram Adapter}
  & \textbf{99.42{\tiny$\pm$0.38}} & \textbf{101.37{\tiny$\pm$0.62}} & 100.13{\tiny$\pm$0.18} & \textbf{99.35{\tiny$\pm$0.38}} & \textbf{100.07{\tiny$\pm$0.24}} \\
\bottomrule
\end{tabular}
\end{table*}
In the AG-News experiment, the Engram Adapter and LoRA $r$=32 tie for the
highest average retention at 99.38\% (Engram $\pm$0.32\%, LoRA $\pm$0.51\%),
with the Engram Adapter exhibiting lower variance. However, LoRA $r$=32's
high average is partly driven by
MedMCQA positive transfer (102.32\%), while its MBPP retention (97.21\%) is
lower than the Engram Adapter's (97.70\%). The remaining methods trail
further: DoRA (97.77\%), LoRA $r$=64 (95.25\%$\pm$4.95\%), Adapter (95.58\%),
$(IA)^3$ (87.87\%), and PiSSA (20.60\%). The OOD collapse of PiSSA and the
FLORES degradation of $(IA)^3$ (71.09\% retention) demonstrate that
not all PEFT methods degrade gracefully---some suffer substantial capability
loss on specific benchmarks.
In the MedMCQA experiment, the Engram Adapter achieves the highest average
retention at 100.07\%$\pm$0.24\%, followed by LoRA $r$=32
(99.69\%$\pm$0.06\%), LoRA $r$=64 (99.28\%$\pm$0.20\%), DoRA (96.09\%),
Adapter (95.15\%), $(IA)^3$ (81.74\%), and PiSSA (7.67\%).
\section{Full LegalBench Results}\label{app:legalbench}
\paragraph{LegalBench evaluation.}
To test OOD preservation on a challenging legal domain, we evaluate all
8B models (trained on AG-News) on 12 LegalBench tasks spanning contract
analysis, evidence rules, and jurisdictional reasoning.
Table~\ref{tab:legalbench} presents the results.
\begin{table*}[t]
\centering
\caption{LegalBench results on Qwen3-8B (trained on AG-News, evaluated OOD on
12 legal reasoning tasks). Values are task accuracies (\%).}
\label{tab:legalbench}
\footnotesize
\setlength{\tabcolsep}{3pt}
\begin{tabular}{@{}l cccccc@{}}
\toprule
\textbf{Task} & \textbf{Base} & \textbf{Engram} & \textbf{LoRA64} & \textbf{lr=$10^{-5}$} & \textbf{lr=$10^{-6}$} & \textbf{TALR} \\
\midrule
abercrombie                   & 36.84 & 40.00 & 36.84 & 23.16 & 37.89 & 32.63 \\
consumer\_contracts\_qa       & 90.40 & 90.91 & 69.70 & 92.68 & 92.17 & 92.68 \\
contract\_nli\_confid.        & 78.05 & 78.05 & 74.39 & 79.27 & 78.05 & 89.02 \\
contract\_nli\_limited        & 82.69 & 83.17 & 65.87 & 81.25 & 83.17 & 81.25 \\
contract\_nli\_return         & 81.82 & 81.82 & 75.76 & 83.33 & 81.82 & 87.88 \\
contract\_qa                  & 97.50 & 97.50 & 97.50 & 97.50 & 97.50 & 98.75 \\
corporate\_lobbying           & 61.02 & 65.10 & 22.04 & 38.98 & 62.86 & 29.80 \\
function\_of\_decision        & 89.65 & 94.55 & 0.00 & 86.65 & 67.57 & 73.02 \\
hearsay                       & 56.38 & 56.38 & 31.91 & 55.32 & 56.38 & 35.11 \\
personal\_jurisdiction        & 62.00 & 60.00 & 4.00 & 48.00 & 62.00 & 42.00 \\
proa                          & 64.21 & 65.26 & 44.21 & 6.32 & 60.00 & 51.58 \\
unfair\_tos                   & 62.39 & 57.67 & 13.77 & 40.44 & 43.11 & 68.14 \\
\midrule
\textbf{Average}              & 71.91 & 72.53 & 44.67 & 61.08 & 68.54 & 65.16 \\
\bottomrule
\end{tabular}
\end{table*}
The Engram Adapter achieves 72.53\% average accuracy across 12 LegalBench
tasks, slightly exceeding the base model (71.91\%) and preserving performance
on 10 out of 12 tasks. The only degradations are relatively small in absolute size
(personal\_jurisdiction $-$2.0pp, unfair\_tos $-$4.7pp). Since the 8B
experiments are single-seed, future work should quantify LegalBench variance
with additional seeds or bootstrap confidence intervals.
Several tasks show noticeable improvement, including
function\_of\_decision (+4.9pp) and corporate\_lobbying (+4.1pp), suggesting that
the AG-News classification training may provide mild positive transfer to
structurally similar legal classification tasks.
We also run LegalBench-only component ablations on the Engram Adapter. Removing
the occupancy mask (\textbf{Engram-no-mask}) yields 72.06\% average accuracy,
and disabling the scalar gate (\textbf{Engram-no-gate}) yields 71.86\%. These
ablations are close to the frozen base model but below the full Engram Adapter,
suggesting that LegalBench average accuracy is relatively insensitive to either
single component while still favoring the complete mask-plus-gate design.
In contrast, LoRA $r$=64 degrades on 10 out of 12 tasks, with
large drops on function\_of\_decision (89.65\% $\to$ 0.00\%) and
personal\_jurisdiction (62.00\% $\to$ 4.00\%). The average drops from 71.91\%
to 44.67\%---a 37.9\% relative degradation.
LoRA (lr=$10^{-5}$) provides a near-matched in-domain comparison to Engram
Adapter on AG-News (86.96\% vs.\ 86.47\%), but its LegalBench average is only
61.08\%. It degrades on 8 out of 12 tasks, with especially large drops on proa
($-$57.9pp), unfair\_tos ($-$22.0pp), and corporate\_lobbying ($-$22.0pp).
LoRA (lr=$10^{-6}$) achieves 68.54\% average accuracy, substantially better
than standard LoRA (44.67\%) and LoRA lr=$10^{-5}$ (61.08\%), and second only
to the Engram Adapter among adapted models. The conservative learning rate
preserves performance on 9 out of 12 tasks, but produces large degradation on
function\_of\_decision
($-$22.1pp) and unfair\_tos ($-$19.3pp), with a smaller drop on proa
($-$4.2pp). This pattern reveals a characteristic weakness of
magnitude-reduction approaches: while a low learning rate limits the overall
scale of $\Delta W$, specific capability subspaces can still be disrupted
disproportionately, and the practitioner has no control over \emph{which}
capabilities are affected.
TALR LoRA achieves 65.16\% average
accuracy, outperforming standard LoRA (44.67\%) but falling
short of LoRA (lr=$10^{-6}$) (68.54\%), the base model (71.91\%), and the
Engram Adapter (72.53\%). It also outperforms the near-matched LoRA
lr=$10^{-5}$ baseline on LegalBench (65.16\% vs.\ 61.08\%), but only by
sacrificing in-domain learning. TALR shows
a mixed pattern: it improves on 5 tasks (notably contract\_nli\_confidentiality
+11.0pp and unfair\_tos +5.8pp) but degrades on 7 tasks, with the largest
drops on corporate\_lobbying ($-$31.2pp), hearsay ($-$21.3pp), and
personal\_jurisdiction ($-$20.0pp). This pattern suggests that token-adaptive
loss reweighting mitigates but does not eliminate the forgetting caused by
unconditional weight modification.

\section{Additional Activation and Ablation Details}\label{app:activation-ablation}
\begin{figure*}[t]
\centering
\includegraphics[width=0.95\textwidth]{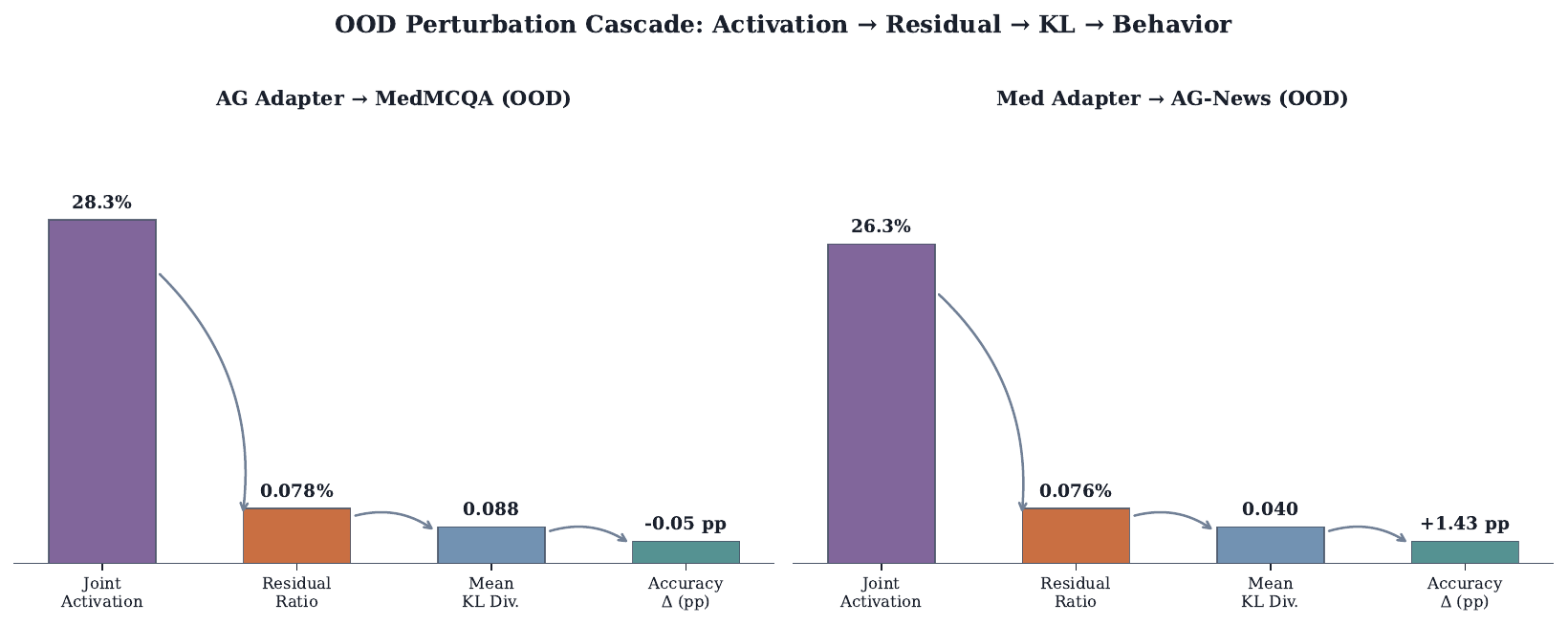}
\caption{OOD perturbation cascade on the two highest-activation cross-domain
pairs. Despite $\approx$27\% token-level activation, the gate suppresses
residual perturbation to $<$0.08\% of hidden-state norm, yielding minimal
KL divergence and negligible accuracy change.}
\label{fig:cascade}
\end{figure*}
\subsection{Matched-Coverage Gate Selectivity Diagnostic}
We further evaluate whether different lightweight or retrieval-based signals can
separate held-out target-domain prompts from OOD prompts. This is a gate
diagnostic rather than a downstream classification-accuracy comparison. For
each scoring rule, we set the threshold to the 5th percentile of held-out
AG-News scores, so that approximately 95\% of held-out AG-News examples are
accepted without using OOD data for threshold selection. We then report the OOD
false-accept rate, i.e., the fraction of OOD examples whose score is at least
this threshold.

\paragraph{Engram $n$-gram density.}
This is the native Engram selectivity signal. After registering occupied hash
buckets from the AG-News training examples, each prompt is scored by the
fraction of valid prompt tokens for which at least one local $n$-gram group
(n2, n3, or p3) activates all four occupied hash entries.

\paragraph{TF-IDF max cosine.}
This lexical retrieval baseline represents each prompt with sparse TF-IDF
features and scores an evaluation prompt by its maximum cosine similarity to
the AG-News training examples. It tests whether simple surface-form retrieval
is sufficient as a domain gate.

\paragraph{Qwen embedding retrieval.}
This dense retrieval baseline embeds each prompt with the frozen Qwen3-4B
backbone and scores an evaluation prompt by its maximum cosine similarity to
the AG-News training embeddings. It provides a stronger semantic retrieval
signal, but requires an additional dense encoding and retrieval path that is
separate from Engram's local hash-memory lookup.

\begin{table}[t]
\centering
\caption{Matched-coverage OOD false-accept rate (\%) for domain-gate signals.
All thresholds are selected at the 5th percentile of held-out AG-News scores,
corresponding to approximately 95\% in-domain acceptance. Lower is better.}
\label{tab:gate_selectivity}
\footnotesize
\setlength{\tabcolsep}{2pt}
\begin{tabularx}{\linewidth}{@{}>{\raggedright\arraybackslash}Xccc@{}}
\toprule
\textbf{Signal} & \textbf{Med.} & \textbf{ARC-C} & \textbf{Legal} \\
\midrule
Engram $n$-gram density & 0.60 & 0.17 & 0.27 \\
TF-IDF max cosine & 16.90 & 24.06 & 31.53 \\
Qwen embedding retrieval & \textbf{0.00} & \textbf{0.00} & \textbf{0.10} \\
\bottomrule
\end{tabularx}
\end{table}
\subsection{Gate Ablation}
To isolate the contribution of the gate, we ablate it on the same two OOD
pairs, measuring the effect on the residual perturbation ratio among activated
tokens. We test two gate variants:
\begin{itemize}
  \item \textbf{$g=1$ (gate disabled)}: Sets $g(t) = 1$ for all positions,
    removing the gate's suppression entirely.
  \item \textbf{no\_dot}: Removes the dot-product coherence term
    $(W_u \odot u(t))^\top \mathrm{mem}(t) / \sqrt{D}$ from the gate,
    retaining only the learned linear term $W_m\bm{c}(t)$.
\end{itemize}
Table~\ref{tab:ablation} reports the results.
\begin{table*}[t]
\centering
\caption{Gate ablation on the two highest-activation OOD pairs. Metrics are
mean and P95 residual ratio
($\|\Delta u\| / \|u\|$, \%) among activated tokens. $\uparrow$ indicates
degradation (higher OOD perturbation); $\downarrow$ indicates improvement.}
\label{tab:ablation}
\small
\begin{tabular}{@{}ll cc cc@{}}
\toprule
& & \multicolumn{2}{c}{\textbf{AG $\to$ MedMCQA}} &
     \multicolumn{2}{c}{\textbf{Med $\to$ AG-News}} \\
\cmidrule(lr){3-4} \cmidrule(lr){5-6}
\textbf{Type} & \textbf{Variant}
  & Mean\% & P95\% & Mean\% & P95\% \\
\midrule
--- & Baseline & 0.275 & 1.124 & 0.290 & 1.233 \\
\midrule
\multirow{2}{*}{Gate}
  & $g=1$ (disabled) & 0.505$\uparrow$ & 2.124$\uparrow$ & 0.509$\uparrow$ & 1.990$\uparrow$ \\
  & no\_dot          & 0.273 & 1.086 & 0.286 & 1.105 \\
\bottomrule
\end{tabular}
\end{table*}
Disabling the gate ($g=1$) increases the mean residual ratio by
1.8$\times$ (AG $\to$ Med) and 1.8$\times$ (Med $\to$ AG), confirming that
the gate provides substantial residual-level attenuation. However, removing the dot-product
coherence term (\textbf{no\_dot}) has virtually no effect: the mean residual ratios are nearly unchanged, while the P95 residuals
decrease slightly. This means the gate's effectiveness
comes primarily from the \emph{learned linear term} $W_m\bm{c}(t)$, which has
learned to suppress incoherent retrievals, rather than from the explicit
coherence signal
$(W_u \odot u(t))^\top \mathrm{mem}(t) / \sqrt{D}$.
The LegalBench-only no-gate result in Appendix~\ref{app:legalbench} further
shows that this larger residual does not necessarily translate into large
average-accuracy loss on every OOD benchmark.

\section{Inference Efficiency and Activation Profile}\label{app:efficiency}
 
To characterize the runtime cost of the Engram Adapter, we report two
complementary profiling views. First, Table~\ref{tab:controlled_efficiency}
reports supplementary controlled measurements collected during the rebuttal
period on a single NVIDIA RTX 6000D using Qwen3-4B. The training profile uses
one epoch over 1k AG-News examples with batch size 8, cutoff length 1024, bf16,
and AdamW. The inference profile fixes batching and maximum input length 256;
generation uses fixed 32-token greedy decoding. These controlled inference
numbers measure Engram Adapter overhead relative to the frozen base model;
decode-step overhead is $+9.18\%$--$+10.99\%$.

\begin{table*}[t]
\centering
\caption{Controlled supplementary efficiency profile on Qwen3-4B
AG-News. Training numbers use an AG-News-1k one-epoch profile on a single
NVIDIA RTX 6000D with batch size 8, cutoff length 1024, bf16, and AdamW.}
\label{tab:controlled_efficiency}
\footnotesize
\setlength{\tabcolsep}{2.3pt}
\begin{tabular}{@{}lccc@{}}
\toprule
\textbf{Method} & \textbf{Params} & \textbf{Train profile} &
\textbf{Peak mem.} \\
\midrule
Base Model
  & 0 & --- & --- \\
LoRA $r\!=\!64$
  & 132.1M & 20.88s / 47.88 ex/s & 23.2GB \\
LoRA $r\!=\!32$
  & 66.1M & 19.85s / 50.38 ex/s & 22.4GB \\
\textbf{Engram Adapter}
  & 154.0M & 19.80s / 50.50 ex/s & 17.5GB \\
\bottomrule
\end{tabular}
\end{table*}

Although Engram Adapter has more trainable parameters than both LoRA baselines,
its measured peak training memory is lower in this profiling setting. We do not
interpret this as a general memory-dominance claim: in our implementation, LoRA
is attached to more projection modules, requiring additional module-input
activations to be retained for backpropagation, which may partly explain the
higher measured peak memory.

Second, Table~\ref{tab:efficiency} reports the original benchmark-dependent
profile from the final evaluation environment. These runs were conducted on a
single NVIDIA L20 GPU with benchmark-dependent sequence lengths and evaluation
windows. They are useful for per-dataset diagnostics and for relating latency
to activation rates, but they are not directly comparable to the controlled
RTX 6000D measurements in Table~\ref{tab:controlled_efficiency}.
Forward overhead is measured as the relative increase in mean prefill latency
compared with the frozen base model on the same inputs. Generation overhead is
measured on MBPP with a fixed maximum of 128 generated tokens for both the base
and adapted models; for the remaining benchmarks, generation overhead is not
reported because the Engram Adapter's domain-adapted outputs differ in length
from the base model's, making total generation time not directly comparable.
 
\begin{table*}[t]
\centering
\caption{Original benchmark-dependent inference overhead and $n$-gram
activation profile on Qwen3-4B.
Forward overhead measures mean prefill latency increase; generation
overhead is reported only for MBPP (fixed 128-token output).
Activation columns report the fraction of token positions where the
joint mask fires for each $n$-gram channel.
For in-domain rows (shaded), generation overhead is not reported
because output-length differences confound the comparison.
FLORES-200 forward overhead is omitted due to high variance across
adapters ($\mathrm{SD}=25.5\%$, $n=2$). These measurements were collected in
the final L20 evaluation environment and should not be directly compared with
the controlled RTX 6000D profile in Table~\ref{tab:controlled_efficiency}.}
\label{tab:efficiency}
\small
\setlength{\tabcolsep}{3pt}
\begin{tabular}{@{}ll cc ccc@{}}
\toprule
& & \multicolumn{2}{c}{\textbf{Latency Overhead}}
& \multicolumn{3}{c}{\textbf{Activation Rate}} \\
\cmidrule(lr){3-4} \cmidrule(lr){5-7}
\textbf{Adapter} & \textbf{Eval Set}
  & Forward & Generation & n2 & n3 & p3 \\
\midrule
\rowcolor{gray!10}
AG-News & AG-News (in-domain)
  & +12.3\% & --- & 0.500 & 0.356 & 0.376 \\
AG-News & MedMCQA
  & +39.9\% & --- & 0.219 & 0.126 & 0.137 \\
AG-News & ARC-C
  & +34.5\% & --- & 0.194 & 0.052 & 0.062 \\
AG-News & MBPP
  & +1.45\% & +21.8\% & 0.397 & 0.221 & 0.221 \\
\midrule
\rowcolor{gray!10}
MedMCQA & MedMCQA (in-domain)
  & +36.6\% & --- & 0.717 & 0.601 & 0.610 \\
MedMCQA & AG-News
  & +6.5\% & --- & 0.198 & 0.096 & 0.106 \\
MedMCQA & ARC-C
  & +34.5\% & --- & 0.313 & 0.145 & 0.158 \\
MedMCQA & MBPP
  & +1.56\% & +22.7\% & 0.347 & 0.122 & 0.126 \\
\bottomrule
\end{tabular}
\end{table*}
 
\paragraph{Latency and activation profile.}
Forward overhead ranges from $+1.45\%$--$+1.56\%$ on MBPP to
$+39.9\%$ on AG$\to$MedMCQA. However, the measured latency is not
monotonic in the token-level activation rate. For example, MBPP shows
higher average activation than AG$\to$MedMCQA for the AG-News adapter,
but much lower prefill overhead. This indicates that runtime is affected
not only by adapter activation, but also by input length distribution,
batching, Python-level hash and lookup overhead, memory-access patterns,
and measurement variance. We therefore use Table~\ref{tab:efficiency}
primarily to report the practical overhead of the current implementation,
while interpreting the activation columns as evidence of selectivity
rather than as a direct latency predictor.
 
\paragraph{In-domain vs.\ OOD activation contrast.}
In-domain activation rates are substantially higher than OOD rates
across all channels. For the AG-News adapter, in-domain n3 activation
is 0.356 versus 0.052--0.221 on OOD sets; for the MedMCQA adapter,
in-domain n3 activation is 0.601 versus 0.096--0.145 on OOD sets.
This gradient from near-domain to far-domain inputs provides direct
evidence for the selectivity prior discussed in
Section~\ref{sec:activation}.
 
\paragraph{Engineering optimization.}
The current implementation uses unoptimized Python-level hash
computation and occupancy lookup. The hash and gather operations are
amenable to fused GPU kernels, which we expect would substantially
reduce the observed overhead. Inference optimization is left to future
work.

\section{Additional Discussion}\label{app:discussion}
\label{sec:discussion}
\paragraph{Why we do not report gate-only LoRA as a formal baseline.}
A natural control is to add a learned scalar gate to an otherwise standard
LoRA residual. We do not report this variant as a formal baseline because its
gate is randomly initialized and trained only on the target-domain adaptation
data, with no calibrated supervision for unrelated OOD domains. In our
implementation, three Gated-LoRA trials with learning rates $10^{-4}$,
$10^{-5}$, and $10^{-6}$ all produced empty outputs on LegalBench, resulting in
degenerate evaluations rather than a meaningful accuracy comparison. We therefore
treat this as an instability diagnosis, not as a formal baseline result.
The relevant distinction is architectural: Engram provides a discrete,
occupancy-based membership signal before residual injection, whereas a
gate-only LoRA variant must infer when to suppress an always-available LoRA
residual from continuous hidden-state statistics alone.
\paragraph{Why training-time retention controls are insufficient.}
The contrasting failures of LoRA (lr=$10^{-5}$), LoRA (lr=$10^{-6}$), and TALR
illuminate a fundamental limitation of training-time retention controls.
Reducing the global learning rate uniformly shrinks $\|\Delta W\|$ across all
layers and all input patterns, yet the resulting perturbation---however small---
is still applied unconditionally to every input. At lr=$10^{-5}$, LoRA nearly
matches Engram Adapter in-domain on AG-News (86.96\% vs.\ 86.47\%) but drops to
61.08\% average accuracy on LegalBench. At lr=$10^{-6}$, the perturbation is
smaller and LegalBench improves to 68.54\%, but the AG-News gain shrinks and
MedMCQA remains below the base model (53.60\% vs.\ 55.51\%). This suggests that
specific capability subspaces can still be disrupted by non-selective updates
even when the update magnitude is reduced.
TALR takes a different training-time approach by reweighting token-level losses,
and indeed achieves strong OOD retention on the standard benchmarks (101.9\%).
However, this comes at a cost in our low-resource LoRA setting:
down-weighting hard-token losses also suppresses useful task-learning signals,
resulting in only marginal in-domain improvement (79.57\% vs.\ base 79.49\%).
On LegalBench, TALR outperforms standard LoRA (65.16\% vs.\ 44.67\%) and
near-matched LoRA lr=$10^{-5}$ (61.08\%) but falls short of both the more
conservative low learning rate variant (68.54\%) and the base model (71.91\%),
suggesting that neither uniform magnitude reduction nor token-adaptive loss
reweighting can match conditional activation's combination of effective
adaptation and reliable retention. These results underscore the
distinction between \emph{modulating how the model is trained} and
\emph{controlling when} the adaptation is applied at inference time. Among the
tested methods, Engram Adapter achieves effective adaptation and reliable
retention by addressing the \emph{conditionality} of intervention rather than only the
training objective or update magnitude.
\paragraph{Conservation vs.\ positive transfer.}
A nuanced consequence of the Engram Adapter's design is that OOD performance
is largely preserved rather than enhanced. We observe that LoRA models trained
on AG-News achieve slightly improved accuracy on MedMCQA (up to 54.90\% on average,
compared to the 53.65\% base model), despite never seeing medical data. This
effect is attributable to improved instruction-following ability: the AG-News
classification training implicitly teaches the model to follow task formatting
conventions (e.g., selecting among labeled options), which transfers to
structurally similar tasks like MedMCQA's multiple-choice format. In contrast,
the Engram Adapter trained on AG-News achieves 53.57\%$\pm$0.20\% on
MedMCQA---close to the base model---indicating that although the adapter does
activate on some OOD tokens (Section~\ref{sec:activation}), the gate mechanism
effectively suppresses the contribution to a level that neither substantially
degrades nor enhances OOD performance.
Thus, our method should be understood as a
retention-oriented specialization mechanism rather than a way to harvest
incidental global instruction-following gains from domain fine-tuning.
Interestingly, on 8B, the Engram Adapter shows mild positive transfer on some
LegalBench tasks (e.g., function\_of\_decision +4.9pp), suggesting that at
larger scale, the adapter's domain-discriminative $n$-grams may partially
overlap with structurally similar legal classification patterns.
\paragraph{Future work.}
We plan to extend this work along several axes: (1) scaling experiments on
models up to 27B parameters with domain corpora spanning law
(MultiLegalPile) and finance (FinPile); (2) systematic ablation of $k \in
\{1,2,4,6,8\}$ and $M \in \{45{,}007;\, 90{,}007;\, 300{,}007\}$ to measure how empirical false positives deviate from the
idealized reference rate in Theorem~\ref{thm:fpr}; (3) multi-domain adapter-isolation experiments where
independent adapters are trained for medicine, law, and finance, then evaluated
on all three domains to confirm cross-domain isolation; (4) training data
scaling experiments (1k--50k examples) to characterize how the in-domain gap
narrows with corpus size; and (5) inference latency and memory profiling across
methods.
\end{document}